\def\eqref#1{equation~\ref{#1}}
\def\1{\bm{1}}
\def\vx{{\bm{x}}}
\def\mI{{\bm{I}}}
\DeclareMathAlphabet{\mathsfit}{\encodingdefault}{\sfdefault}{m}{sl}
\SetMathAlphabet{\mathsfit}{bold}{\encodingdefault}{\sfdefault}{bx}{n}
\newcommand{\emp}{\tilde{p}}
\DeclareMathOperator*{\argmin}{arg\,min}
\theoremstyle{plain}
\newtheorem{theorem}{Theorem}[section]
\newtheorem{definition}[theorem]{Definition}
\newtheorem{remark}[theorem]{Remark}
\newcommand\myul[2]{%
  \protect\leavevmode
  {%
    \color{#1}%
    \vtop{%
      \hbox{\color{black}#2}%
      \kern-\prevdepth
      \kern.3mm \relax
      \hrule height 2.0pt\relax
      \kern-.3mm\relax
      \kern-2.0pt\relax
    }%
  }%
}%
\definecolor{blush}{rgb}{0.87, 0.36, 0.51}
\newcommand*\diff{\mathop{}\!\mathrm{d}}
\DeclareMathOperator*{\I}{\mathrm{I}}
\DeclareMathOperator*{\arginf}{arg\,inf}
\DeclareMathOperator*{\supinf}{sup\,\,\,\,\,\,inf}
\title{Unbalancedness in Neural Monge Maps \\ Improves Unpaired Domain Translation}
\author{Luca Eyring$^{1,2,3}$\thanks{equal contribution} \quad Dominik Klein$^{2,3}$\footnotemark[1] \quad Théo Uscidda$^{2,4}$\footnotemark[1] \quad Giovanni Palla$^{2,3}$ \\ \textbf{Niki Kilbertus$^{2,3,5}$ \quad Zeynep Akata$^{1,2,3}$ \quad Fabian Theis$^{2,3,5}$} \\
\\
\textsuperscript{1}Tübingen AI Center \quad \textsuperscript{2}Helmholtz Munich \quad \textsuperscript{3}TU Munich \\
\textsuperscript{4}CREST-ENSAE \quad \textsuperscript{5}Munich Center for Machine Learning (MCML)\\
\texttt{luca.eyring@uni-tuebingen.de \quad theo.uscidda@ensae.fr}\\
\texttt{\{dominik.klein,fabian.theis\}@helmholtz-muenchen.de}\\
}
\newcommand*\D{\mathrm{D}}
\newcommand*\Id{\mathrm{I}_d}
\newcommand*\kl{\mathrm{KL}}
\def\*#1{\mathbf{#1}}
\def\+#1{\mathcal{#1}}
\def\emp*#1{\hat{#1}_n}
\newcommand{\xhdr}[1]{\textbf{#1}\:}
\begin{document}

\maketitle

\begin{abstract}
In optimal transport (OT), a Monge map is known as a mapping that transports a source distribution to a target distribution in the most cost-efficient way. Recently, multiple neural estimators for Monge maps have been developed and applied in diverse unpaired domain translation tasks, e.g. in single-cell biology and computer vision. However, the classic OT framework enforces mass conservation, which makes it prone to outliers and limits its applicability in real-world scenarios. The latter can be particularly harmful in OT domain translation tasks, where the relative position of a sample within a distribution is explicitly taken into account. While unbalanced OT tackles this challenge in the discrete setting, its integration into neural Monge map estimators has received limited attention. We propose a theoretically grounded method to incorporate unbalancedness into \textit{any} Monge map estimator. We improve existing estimators to model cell trajectories over time and to predict cellular responses to perturbations. Moreover, our approach seamlessly integrates with the OT flow matching (OT-FM) framework. While we show that OT-FM performs competitively in image translation, we further improve performance by incorporating unbalancedness (UOT-FM), which better preserves relevant features. We hence establish UOT-FM as a principled method for unpaired image translation.
\end{abstract}

\section{Introduction}\label{sec:introduction}
Unpaired domain translation aims to transform data from a source to a target distribution without access to paired training samples. 
This setting poses the significant challenge of achieving a meaningful translation between distributions while retaining relevant input features.
Although there are many ways to define the desired properties of such a transformation, optimal transport (OT) offers a natural framework by matching samples across distributions in the most cost-efficient way. 
If this optimal correspondence can be formulated as a map, such a map is known as a Monge map.

Recently, a considerable number of neural parameterizations to estimate Monge maps have been proposed. While earlier estimators were limited to the squared Euclidean distance \citep{makkuva2020optimal, korotin2020wasserstein, amos2022amortizing}, more flexible approaches have been proposed recently \citep{uscidda2023monge,tong2023improving,pooladian2023multisample, pooladian2023neural}. Neural Monge maps have been successfully applied to a variety of domain translation tasks including applications in computer vision \citep{korotin2022neural, tong2023improving, pooladian2023multisample, mokrov2023energy} and the modeling of cellular responses to perturbations \citep{bunne2021learning}.

In its original formulation, optimal transport assumes static marginal distributions. This can limit its applications as it cannot account for \textbf{[i]} outliers and \textbf{[ii]} undesired distribution shifts, e.g.\ class imbalance between distributions as visualized in Figure~\ref{fig:emnist-concept}. Unbalanced OT (UOT) \citep{2015-chizat-unbalanced} overcomes these limitations by replacing the conservation of mass constraint with a penalization on mass deviations. The practical significance of unbalancedness in discrete OT has been demonstrated in various applications, e.g.\ in video registration \citep{lee2020parallel}, computer vision~\citep{deplaen2023unbalanced}, or domain adaptation \citep{fatras2021unbalanced}. Existing methods for estimating neural Monge maps with unbalancedness \citep{yang2019scalable, lubeck2022neural} are limited to specific Monge map estimators and rely on adversarial training, see Section~\ref{sec:related_work} for a detailed discussion.

In light of these limitations, we introduce a new framework for incorporating unbalancedness into \textit{any} Monge map estimator based on a re-scaling scheme. We motivate our approach theoretically by proving that we can incorporate unbalancedness into neural Monge maps by rescaling source and target measures accordingly. To validate the versatility of our approach, we showcase its applicability on both synthetic and real-world data, utilizing different neural Monge map estimators. We highlight the critical role of incorporating unbalancedness to infer trajectories in developmental single-cell data using ICNN-based estimators (OT-ICNN)~\citep{makkuva2020optimal} and to predict cellular responses to perturbations with the Monge gap~\citep{uscidda2023monge}.

Monge maps can also be approximated with OT flow matching (OT-FM) \citep{lipman2023flow, liu2022flow, albergo2023building}, a simulation-free technique for training continuous normalizing flows \citep{chen2018neural} relying on mini-batch OT couplings \citep{pooladian2023multisample, tong2023improving}. The universal applicability of our method allows us to extend OT-FM to the unbalanced setting (UOT-FM). We demonstrate that unbalancedness is crucial for obtaining meaningful matches when translating \textit{digits} to \textit{letters} in the EMNIST dataset \citep{cohen2017emnist}. Additionally, we benchmark OT-FM on unpaired natural image translation and show that it achieves competitive results compared to established methods. Moreover, UOT-FM elucidates the advantages of unbalancedness in image translation as it \textbf{[i]} improves overall performance upon OT-FM while additionally \textbf{[ii]} helping to preserve relevant input features and lowering the learned transport cost. This establishes UOT-FM as a new principled method for unpaired image translation. To summarize:

\begin{enumerate}[leftmargin=*,nosep]
\item We propose an efficient algorithm to integrate \textit{any} balanced Monge map estimator into the unbalanced OT framework.
\item We theoretically verify our approach by proving that computing a Monge map between measures of unequal mass can be reformulated as computing a Monge map between two rescaled measures.
\item We demonstrate that incorporating unbalancedness yields enhanced results across three distinct tasks employing three different Monge map estimators. We find that our proposed approach enables the recovery of more biologically plausible cell trajectories and improves the prediction of cellular responses to cancer drugs. Furthermore, our method helps to preserve relevant input features on unpaired natural image translation.
\end{enumerate}

\begin{figure}[t]
\centering
\vspace{-15mm}
\includegraphics[width=1\linewidth]{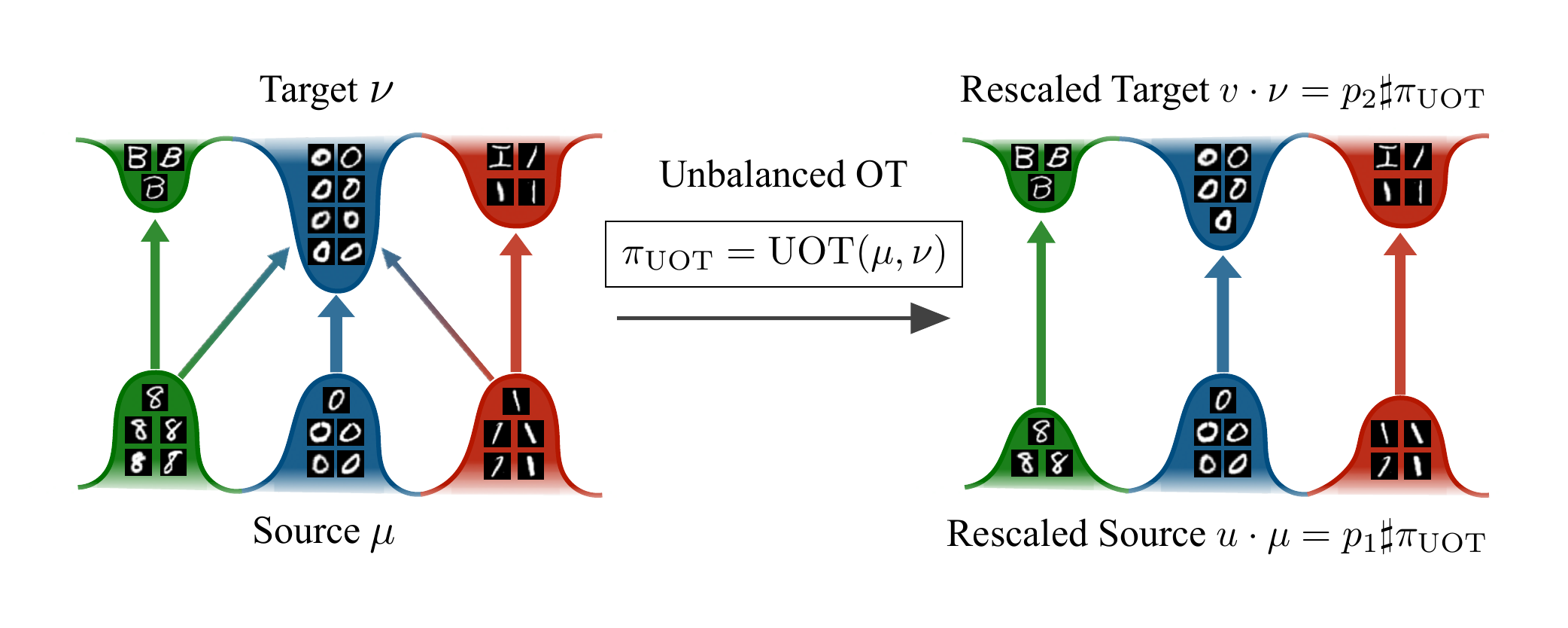}
\vspace{-11mm}
\caption{Comparison of balanced and unbalanced Monge map computed on the EMNIST dataset translating \textit{digits $\rightarrow$ letters}. Source and target distribution are rescaled leveraging the unbalanced OT coupling. The computed balanced mapping includes $8 \rightarrow \{O, B\}$, and $1 \rightarrow \{O, I\}$ because of the distribution shift between digits and letters. With unbalancedness $8 \rightarrow B$, and $1 \rightarrow I$ are recovered.}
\vspace{-3mm}
\label{fig:emnist-concept}
\end{figure}

\section{Background}

\xhdr{Notation.}  For $\Omega \subset \mathbb{R}^d$, $\mathcal{M}^+(\Omega)$ and $\+M_1^+(\Omega)$ are respectively the set of positive measures and probability measures on $\Omega$. For $\mu \in \+M^+(\Omega)$, we write $\mu \ll \mathcal{L}_d$ if $\mu$ is absolutely continuous w.r.t.\ the Lebesgue measure. For a Lebesgue measurable map $T : \Omega \rightarrow \Omega$ and $\mu \in \+M^+(\Omega)$, $T\sharp\mu$ denotes the pushforward of $\mu$ by $T$, namely, for all Borel sets $A \subset \Omega$, $T\sharp\mu(A) = \mu(T^{-1}(A))$. For  $\mu, \nu \in \mathcal{M}^+(\Omega)$, \ $\Pi(\mu, \nu) := \{ \pi : \, p_1\sharp\pi = \mu, \, p_2\sharp\pi = \nu\} \subset \mathcal{P}(\Omega \times \Omega)$ where $p_1 : (\*x, \*y) \mapsto \*x$ and $p_2 : (\*x, \*y) \mapsto \*y$ are the canonical projectors, so $p_1\sharp\pi$ and $p_2\sharp\pi$ are the marginals of $\pi$.

\xhdr{Monge and Kantorovich Formulations.} Let $\Omega \subset \mathbb{R}^d$ be a compact set and $c : \Omega \times \Omega \rightarrow \mathbb{R}$ a continuous cost function. The \citeauthor{Monge1781} problem (MP) between $\mu, \nu \in \+M_1^+(\Omega)$ consists of finding a map $T: \Omega \rightarrow \Omega$ that push-forwards $\mu$ onto $\nu$, while minimizing the average displacement cost
\begin{equation}
\label{eq:monge-problem}
\tag*{(MP)}
\inf_{T:T\sharp\mu=\nu} \int_\Omega c(\*x, T(\*x)) \diff\mu(\*x)\,.
\end{equation}
We call any solution $T^\star$ to Problem~\ref{eq:monge-problem} a Monge map between $\mu$ and $\nu$. Solving this problem is difficult, especially for discrete $\mu,\nu$, for which the constraint set can even be empty. Instead of transport maps, the \citeauthor{kantorovich1942transfer} problem (KP) of OT considers couplings $\pi \in \Pi(\mu, \nu)$:
\begin{equation}
\label{eq:kantorovich-problem}
\tag*{(KP)}
\mathrm{W}_c(\mu,\nu) := \min_{\pi \in \Pi(\mu, \nu)} \int_{\Omega\times \Omega} c(\*x, \*y) \diff\pi(\*x, \*y)\,.
\end{equation}

An OT coupling $\pi_\mathrm{OT}$, a solution of \ref{eq:kantorovich-problem}, always exists. When Problem~\ref{eq:monge-problem} admits a solution, these formulations coincide with $\pi_\mathrm{OT} = (\Id, T^\star)\sharp\mu$, i.e.\ $\pi_\mathrm{OT}$ is deterministic, which means that if $(\*x,\*y)\sim\pi$, then $\*y=T(\*x)$. Problem~\ref{eq:kantorovich-problem} admits a dual formulation. Denoting by $\Phi_c(\mu, \nu) = \{(f,g) | f,g :\Omega \to \mathbb{R},\, f\oplus g\leq c ,\, \mu\otimes\nu\text{-a.e.}\}$ where $f\oplus g : (\*x,\*y) \mapsto f(\*x) + g(\*y)$ is the tensor sum, it reads
\begin{equation}
\label{eq:dual-kantorovich-problem}
\tag*{(DKP)}
(f, g) \in \argmin_{(f,g) \in \Phi_c(\mu, \nu)} \int_{\Omega} f(\*x) \diff \mu(\*x) + \int_{\Omega} g(\*y) \diff\nu(\*y)\,.
\end{equation}

\xhdr{Extension to the Unbalanced Setting.} The \citeauthor{kantorovich1942transfer} formulation imposes mass conservation, so it cannot handle arbitrary positive measures $\mu,\nu \in \+M^+(\Omega)$. Unbalanced optimal transport (UOT)~\citep{benamou2003numerical, 2015-chizat-unbalanced} lifts this constraint by penalizing instead the deviation of $p_1\sharp\pi$ to $\mu$ and $p_2\sharp\pi$ to $\nu$ via a $\phi$-divergence $\D_\phi$. These divergences are defined trhough an entropy function $\phi : (0,+\infty) \rightarrow [0,+\infty]$, which is convex, positive, lower-semi-continuous and s.t.\ $F(1) = 0$. Denoting $\phi'_\infty = \lim_{x\to\infty} \phi(\*x) / \*x$ its recession constant, $\alpha,\beta \in \+{M}(\Omega)$, we have
\begin{equation}
    \D_\phi(\alpha | \beta) =  
    \int_\Omega \phi \left( \frac{\diff \alpha}{\diff \beta} \right) \diff \alpha 
    + \phi'_\infty \int_\Omega \diff \alpha^\perp \,,
\end{equation}
where we write the Lebesgue decomposition $\alpha = \frac{\diff \alpha}{\diff \beta}\beta + \alpha^\perp$, with $\frac{\diff \alpha}{\diff \beta}$ the relative density of $\alpha$ w.r.t.\ $\beta$. In this work, we consider strictly convex and differentiable entropy functions $\phi$, with $\phi'_\infty = +\infty$. This includes, for instance, the $\kl$, the Jensen-Shanon, or the $\chi^2$ divergence. Introducing $\lambda_1, \lambda_2 > 0$ controlling how much mass variations are penalized as opposed to transportation, the unbalanced~\citeauthor{kantorovich1942transfer} problem (UKP) then seeks a measure $\pi \in \+M^+(\+X\times\+Y)$:
\begin{equation}
\label{eq:unbalanced-kantorovich-problem}
\tag*{(UKP)}
\mathrm{UW}_c(\mu,\nu) := \min_{\pi \in \+M^+(\Omega\times\Omega)} 
\int_{\Omega\times\Omega} c(\*x, \*y) \diff\pi(\*x, \*y) + \lambda_1 \D_\phi(p_1\sharp\pi | \mu) + \lambda_2 \D_\phi(p_2\sharp\pi | \nu).
\end{equation}
An UOT coupling $\pi_\mathrm{UOT}$ always exists. In practice, instead of directly selecting $\lambda_i$, we %
introduce $\tau_i = \tfrac{\lambda_i}{\lambda_i + \varepsilon}$, where $\varepsilon$ is the entropy regularization parameter as described in Appendix~\ref{app:entropic-regularization}. Then, we recover balanced OT for $\tau_i = 1$, equivalent to $\lambda_i \rightarrow +\infty$, and increase unbalancedness by decreasing $\tau_i$. We write $\tau = (\tau_1, \tau_2)$ accordingly. Finally, the \ref{eq:unbalanced-kantorovich-problem} also admits a dual formulation which reads
\begin{equation}
\label{eq:dual-unbalanced-kantorovich-problem}
\tag*{(UDKP)}
(f, g) \in \argmin_{(f,g) \in \Phi_c(\mu, \nu)} \int_{\Omega} -\phi^*(-f(\*x)) \diff \mu(\*x) + \int_{\Omega} -\phi^*(g(\*y)) \diff\nu(\*y)\,.
\end{equation}

\section{Unbalanced Neural Monge Maps}

\xhdr{Unbalanced Monge Maps.}
\label{subsec:rebalancing-monge}
Although the \citeauthor{kantorovich1942transfer} formulation has an unbalanced extension, deriving an analogous \citeauthor{Monge1781} formulation that relaxes the mass conservation constraint is more challenging. Indeed, by adopting the same strategy of replacing the marginal constraint $T\sharp\mu = \nu$ with a penalty $\D_\phi(T\sharp\mu, \nu)$, we would allow the distribution of transported mass $T\sharp\mu$ to shift away from $\nu$, but the total amount of transported mass would still remain the same. Instead, we can follow a different approach and consider a two-step procedure where (i) we re-weight $\mu$ and $\nu$ into measures $\tilde{\mu} = u \cdot \mu$ and $\tilde{\nu} = v \cdot \mu$ having the same total mass using $u,v: \mathbb{R}^d \to \mathbb{R}^+$, and (ii) then compute a (balanced) Monge map between $\tilde{\mu}$ and $\tilde{\nu}$. Hence, it remains to define the re-weighted measures $\tilde{\mu},\,\tilde{\nu}$. Therefore, we seek to achieve two goals: \textbf{[i]} create or destroy mass to minimize the cost of transporting measure $\tilde{\mu}$ to $\tilde{\nu}$, while \textbf{[ii]} being able to control the deviation of the latter from the input measures $\mu$ and $\nu$. To that end, we can show that computing $\pi_\mathrm{UOT}$ between $\mu$ and $\nu$ implicitly amounts to following this re-balancing procedure. Additionally, $\tilde{\mu}$ and $\tilde{\nu}$ can be characterized precisely.

\begin{restatable}
[Re-balancing the UOT problem]
{proposition}{PropReBalance}
\label{prop:rebalancing-monge-problem}
Let $\pi_\mathrm{UOT}$ be the solution of problem~\ref{eq:unbalanced-kantorovich-problem} between $\mu, \nu \in \+M^+(\Omega)$, for $\tau_1,\tau_2 >0$. Then, the following holds:
\begin{enumerate}[leftmargin=.5cm,itemsep=.0cm,topsep=0cm]
    \item~\label{prop:rebalancing-monge-problem-1} $\pi_{\mathrm{UOT}}$ solves the balanced problem~\ref{eq:kantorovich-problem} between its marginal $\tilde{\mu} = p_1\sharp\pi_{\mathrm{UOT}}$ and $\tilde{\nu} = p_2\sharp\pi_{\mathrm{UOT}}$, which are re-weighted versions of $\mu$ an $\nu$ that have the same total mass. Indeed, $\tilde{\mu} = \bar{\phi} (-f^\star)\cdot\mu$ and $\tilde{\nu} = \bar{\phi}(-g^\star) \cdot\nu$, where $\bar{\phi} = (\phi^*)'$ and $f^\star, g^\star:\Omega\to\mathbb{R}$ are solution of~\ref{eq:dual-unbalanced-kantorovich-problem}.
    \item~\label{prop:rebalancing-monge-problem-2} If we additionally assume that $\mu \ll \mathcal{L}_d$ and that the cost $c(\*x, \*y) = h(\*x - \*y)$ with $h$ strictly convex, then $\pi_{\mathrm{UOT}}$ is unique and deterministic: $\pi_{\mathrm{UOT}} = (\I_d,T^\star)\sharp\tilde{\mu}$, where $T^\star = \Id - \nabla h^* \circ f^\star$ is the Monge map between $\tilde{\mu}$ and $\tilde{\nu}$ for cost $c$. 
\end{enumerate}
\end{restatable}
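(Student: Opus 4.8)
The plan is to prove the two items in turn, the second building on the first together with classical regularity theory for balanced OT; write $\tilde\mu := p_1\sharp\pi_{\mathrm{UOT}}$ and $\tilde\nu := p_2\sharp\pi_{\mathrm{UOT}}$. That $\tilde\mu$ and $\tilde\nu$ have equal total mass is immediate, since $\tilde\mu(\Omega) = \pi_{\mathrm{UOT}}(\Omega\times\Omega) = \tilde\nu(\Omega)$. That $\pi_{\mathrm{UOT}}$ solves the balanced problem~\ref{eq:kantorovich-problem} between $\tilde\mu$ and $\tilde\nu$ I would get by contradiction: any competitor $\pi' \in \Pi(\tilde\mu,\tilde\nu)$ is a nonnegative measure whose marginals agree with those of $\pi_{\mathrm{UOT}}$, hence $\pi'$ is admissible in~\ref{eq:unbalanced-kantorovich-problem} with the \emph{same} divergence penalties $\D_\phi(\tilde\mu|\mu)$ and $\D_\phi(\tilde\nu|\nu)$, so $\int c\,\diff\pi' < \int c\,\diff\pi_{\mathrm{UOT}}$ would make $\pi'$ strictly better than $\pi_{\mathrm{UOT}}$ in~\ref{eq:unbalanced-kantorovich-problem}, a contradiction. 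The quantitative statement $\tilde\mu = \bar\phi(-f^\star)\cdot\mu$ is the substantive part. Here I would use the variational (Legendre) representation of the $\phi$-divergence, $\D_\phi(\alpha|\mu) = \sup_{p:\Omega\to\R}\big\{\int p\,\diff\alpha - \int\phi^*(p)\,\diff\mu\big\}$, which holds because $\phi'_\infty = +\infty$, to rewrite~\ref{eq:unbalanced-kantorovich-problem} as a min--max, swap inf and sup (Fenchel--Rockafellar; strong duality and attainment of $(f^\star,g^\star)$ solving~\ref{eq:dual-unbalanced-kantorovich-problem}), and read off the first-order conditions: the inner minimization over $\pi$ forces $f^\star\oplus g^\star = c$ on $\mathrm{supp}\,\pi_{\mathrm{UOT}}$, while optimality in the potential attached to the first marginal forces $\diff\tilde\mu/\diff\mu = (\phi^*)'(-f^\star) = \bar\phi(-f^\star)$ $\mu$-a.e., with no singular part (strict convexity and differentiability of $\phi$ give a unique smooth $(\phi^*)'$, and $\phi'_\infty = +\infty$ penalizes any $\mu$-singular component by $+\infty$). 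The same computation on the second marginal gives $\tilde\nu = \bar\phi(-g^\star)\cdot\nu$.

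For the second item, assume now $\mu\ll\mathcal{L}_d$ and $c(\*x,\*y) = h(\*x-\*y)$ with $h$ strictly convex. From the first item, $\tilde\mu = \bar\phi(-f^\star)\cdot\mu \ll \mu \ll \mathcal{L}_d$, and $\pi_{\mathrm{UOT}}$ is an optimal plan for the balanced~\ref{eq:kantorovich-problem} between $\tilde\mu$ and $\tilde\nu$. Moreover, integrating the support identity $f^\star(\*x) + g^\star(\*y) = c(\*x,\*y)$ against $\pi_{\mathrm{UOT}}$ gives $\int f^\star\,\diff\tilde\mu + \int g^\star\,\diff\tilde\nu = \int c\,\diff\pi_{\mathrm{UOT}} = \mathrm{W}_c(\tilde\mu,\tilde\nu)$; since $(f^\star,g^\star)\in\Phi_c(\tilde\mu,\tilde\nu)$ (the constraint is inherited from $\Phi_c(\mu,\nu)$ because $\tilde\mu\otimes\tilde\nu\ll\mu\otimes\nu$), duality for the balanced problem shows $(f^\star,g^\star)$ is an optimal potential pair, i.e.\ a Kantorovich potential pair for $\tilde\mu,\tilde\nu$. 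Now apply the Gangbo--McCann theorem (Brenier's theorem when $h=\tfrac12\|\cdot\|_2^2$): since $\tilde\mu\ll\mathcal{L}_d$ and $h$ is strictly convex, the OT plan between $\tilde\mu$ and $\tilde\nu$ is unique and deterministic, equal to $(\I_d,T^\star)\sharp\tilde\mu$ with $T^\star = \Id - \nabla h^*\circ\nabla f^\star$. Finally, to conclude that $\pi_{\mathrm{UOT}}$ is itself unique, note that $\alpha\mapsto\D_\phi(\alpha|\mu)$ and $\beta\mapsto\D_\phi(\beta|\nu)$ are strictly convex (strict convexity of $\phi$, together with $\phi'_\infty = +\infty$), so averaging two solutions of~\ref{eq:unbalanced-kantorovich-problem} with distinct first or second marginals would strictly lower the objective; hence every solution has marginals $\tilde\mu,\tilde\nu$, and uniqueness of the balanced OT plan between these fixed marginals (just established) forces the solution to equal $\pi_{\mathrm{UOT}}$.

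The contradiction argument of the first item and the appeal to Gangbo--McCann are routine. The delicate point is the primal--dual identity $\tilde\mu = \bar\phi(-f^\star)\cdot\mu$: one must carry out the convex duality for~\ref{eq:unbalanced-kantorovich-problem} on the non-compact cone $\+M^+(\Omega\times\Omega)$ with care — justifying strong duality and attainment of the dual optimum — and differentiate $\D_\phi(\cdot\,|\,\mu)$ correctly, in particular excluding a singular part in the Lebesgue decomposition of $\tilde\mu$ with respect to $\mu$. This is exactly where the standing hypotheses on $\phi$ (strictly convex, differentiable, $\phi'_\infty = +\infty$) are used; without them the rescaled measures would not be characterized by the clean formula above, and the reduction to a balanced Monge problem could fail.
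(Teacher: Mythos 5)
Your proposal is correct, and it reaches the same conclusions as the paper by a partly different and in places more self-contained route. For the balanced optimality of $\pi_{\mathrm{UOT}}$ between its own marginals you argue purely on the primal side: any competitor in $\Pi(\tilde\mu,\tilde\nu)$ incurs the identical divergence penalties, so $\pi_{\mathrm{UOT}}$ must already minimize $\int c\,\diff\pi$ over that set. The paper instead goes through the dual: it cites \citet[Corollary 4.16]{LieroMielkeSavareLong} to obtain $(f^\star,g^\star)$ with $f^\star\oplus g^\star=c$ $\pi_{\mathrm{UOT}}$-a.e., checks that this pair is optimal for the balanced dual between $\tilde\mu$ and $\tilde\nu$, and concludes via strong duality. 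Your primal argument is more elementary for that particular step, but note that you still need the dual potentials later (to identify $T^\star=\Id-\nabla h^*\circ\nabla f^\star$ in the second item), so the dual machinery is deferred rather than avoided. For the density formula $\tilde\mu=\bar\phi(-f^\star)\cdot\mu$, you propose to re-derive the primal--dual first-order conditions via the variational representation of $\D_\phi$ and Fenchel--Rockafellar, correctly flagging this as the delicate point (strong duality and dual attainment on $\+M^+(\Omega\times\Omega)$, exclusion of a $\mu$-singular part via $\phi'_\infty=+\infty$); the paper simply imports both the optimality system $f^\star=-\phi'\circ u$ and its attainment from the same corollary of Liero--Mielke--Savar\'e, then inverts $\phi'$ using strict convexity exactly as you do. Finally, your uniqueness argument for $\pi_{\mathrm{UOT}}$ -- strict convexity of $\alpha\mapsto\D_\phi(\alpha|\mu)$ forces all optimizers to share the marginals $\tilde\mu,\tilde\nu$, after which Gangbo--McCann uniqueness of the balanced plan applies -- is actually more complete than the paper's, which passes directly to uniqueness of the plan between the fixed marginals without ruling out optimizers with different marginals. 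Two cosmetic points: you write $\nabla h^*\circ\nabla f^\star$ where the paper writes $\nabla h^*\circ f^\star$ (yours is the correct form of \citet[Theorem 1.17]{santambrogio2015optimal}), and both you and the paper suppress the dependence of the first-order conditions on $\lambda_1,\lambda_2$.
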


\begin{remark}
    All the cost functions $c(\*x,\*y) = \|\*x - \*y\|_2^p$ with $p>1$ can be dealt with via point \ref{prop:rebalancing-monge-problem-2} of Prop. \ref{prop:rebalancing-monge-problem}. For $p=2$, we recover an unbalanced counterpart of \citet{Brenier1987} Theorem stating that $\pi_\mathrm{UOT}$ is supported on the graph of $T = \nabla\varphi^\star$ where $\varphi^\star : \mathbb{R}^d \to \mathbb{R}$ is convex.
\end{remark}

Prop.~\ref{prop:rebalancing-monge-problem} highlights that the optimal re-weighting procedure relies on setting $\tilde{\mu}=p_1\sharp\pi_\mathrm{UOT}$ and $\tilde{\nu}=p_2\sharp\pi_\mathrm{UOT}$, where we directly control the deviation to the input measures $\mu$ and $\nu$ with $\tau_1$ and $\tau_2$. This enables us to define \textit{unbalanced Monge maps} formally. We illustrate this concept in Figure~\ref{fig:emnist-concept}.

\begin{definition}[Unbalanced Monge maps]
\label{def:unbalanced-monge-map}
Provided that it exists, we define the \textit{unbalanced Monge map} between $\mu, \nu \in \+M^+(\Omega)$ as the Monge map $T^\star$ between the marginal $\tilde{\mu}$ and $\tilde{\nu}$ of the $\pi_\mathrm{UOT}$ between $\mu$ and $\nu$. From point \ref{prop:rebalancing-monge-problem-2} of Prop.~\ref{prop:rebalancing-monge-problem}, it satisfies $\pi_\mathrm{UOT} = (\Id, T^\star)\sharp\tilde{\mu}$. 
\end{definition}

Proofs are provided in Appendix~\ref{app:proofs}. We stress that even when $\mu$ and $\nu$ have the same mass, the unbalanced Monge map does not coincide with the classical Monge map as defined in~\ref{eq:monge-problem}. Indeed, $\tilde{\mu}\ne\mu$ and $\tilde{\nu}\ne\nu$ unless we impose $\tau_1 = \tau_2=1$ to recover each marginal constraint.

\begin{figure}[t]
\vspace{-5mm}
\centering
\begin{subfigure}{1\textwidth}
    \centering
    \includegraphics[width=0.6\linewidth]{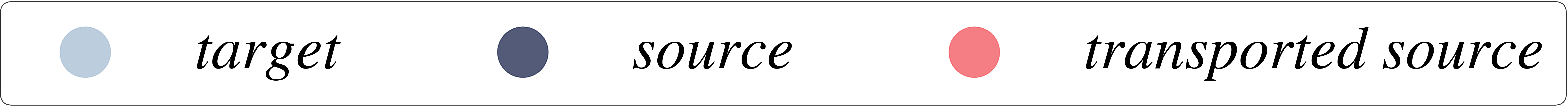}
\end{subfigure}
\begin{subfigure}{.24\textwidth}
  \centering
  \includegraphics[width=1\linewidth]{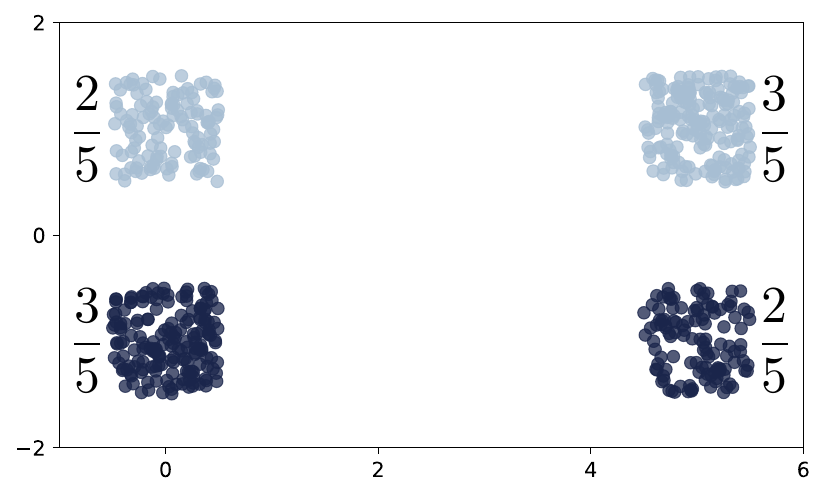}
  \caption{"Unbalanced" data}
  \label{subfig:unbalanced_data}
\end{subfigure}
\begin{subfigure}{.24\textwidth}
  \centering
  \includegraphics[width=1.0\linewidth]{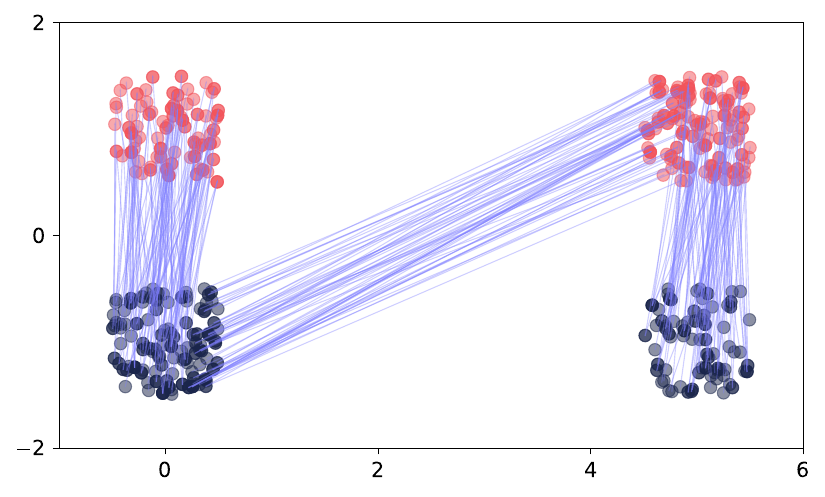}
  \caption{Discrete OT ($\tau=1.0$)}
  \label{subfig:standard_ot}
\end{subfigure}
\begin{subfigure}{.24\textwidth}
  \centering
  \includegraphics[width=1.0\linewidth]{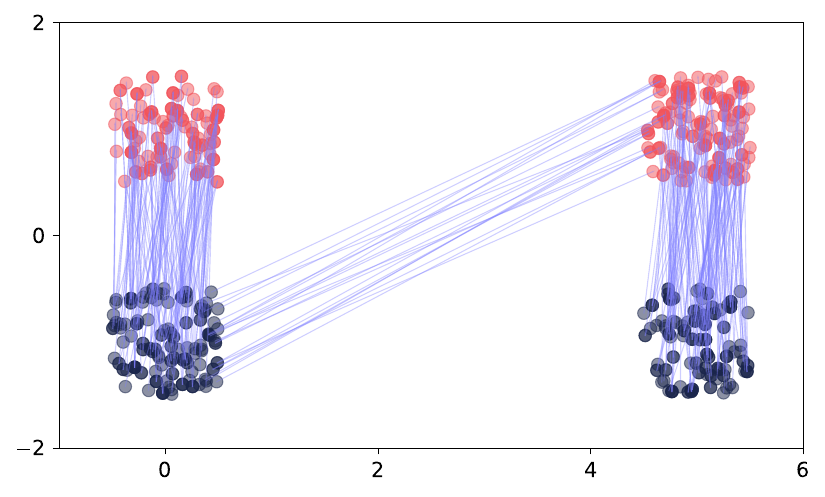}
  \caption{Disc. UOT ($\tau=0.99$)}
  \label{subfig:standard_ot_tau0.99}
\end{subfigure}
\begin{subfigure}{.24\textwidth}\label{subfig:standard_ot_tau}
  \centering
  \includegraphics[width=1.0\linewidth]{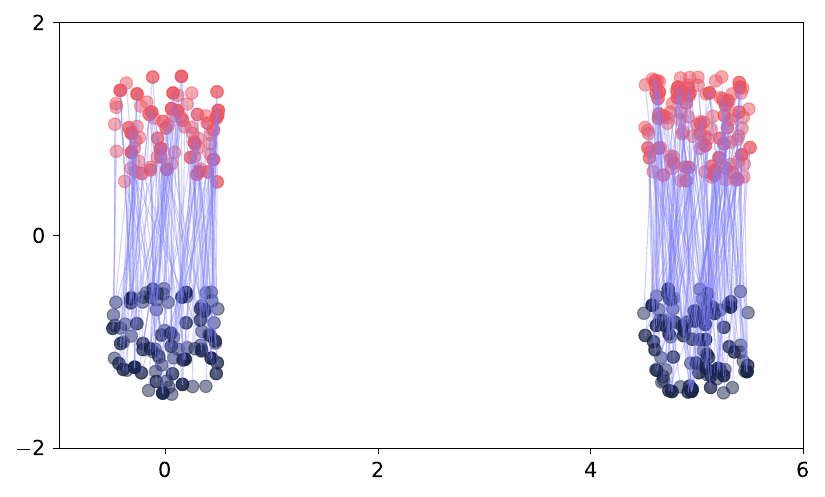}
  \caption{Disc. UOT ($\tau=0.9$)}
  \label{subfig:standard_ot_tau0.9}
\end{subfigure}
\begin{subfigure}{.24\textwidth}
  \centering
  \includegraphics[width=1.0\linewidth]{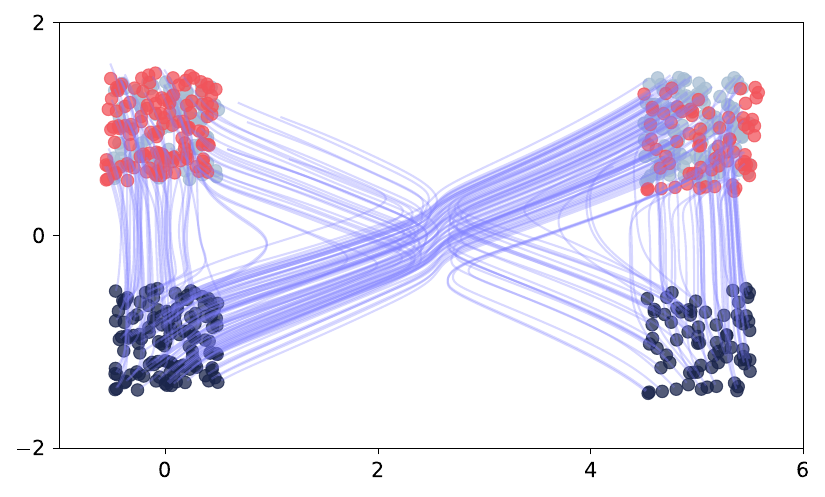}
  \caption{Flow Matching}
  \label{subfig:uniform_fm}
\end{subfigure}
\begin{subfigure}{.24\textwidth}
  \centering
  \includegraphics[width=1.0\linewidth]{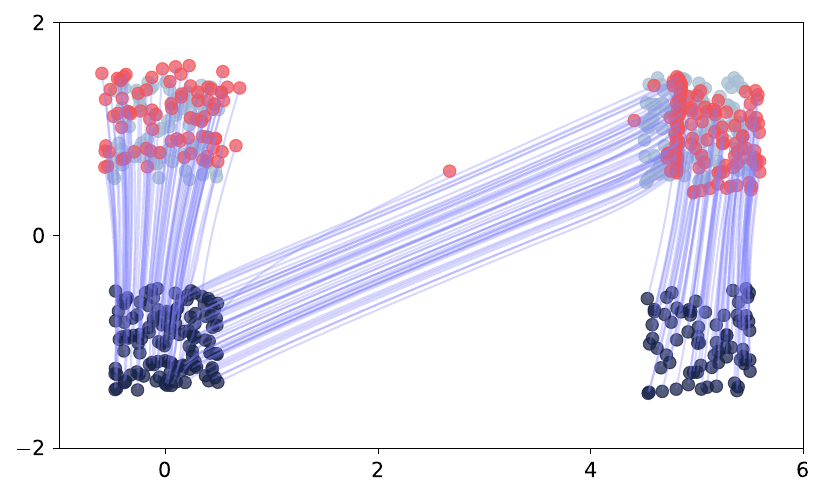}
  \caption{OT-FM ($\tau=1.0$)}
  \label{subfig:uniform_ot-fm}
\end{subfigure}
\begin{subfigure}{.24\textwidth}
  \centering
  \includegraphics[width=1.0\linewidth]{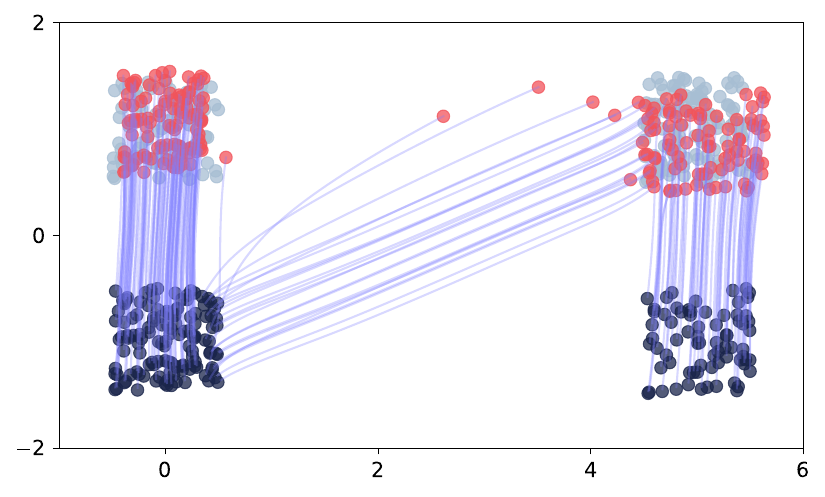}
  \caption{UOT-FM ($\tau=0.99$)}
  \label{subfig:uniform_uot-fm_099}
\end{subfigure}
\begin{subfigure}{.24\textwidth}
  \centering
  \includegraphics[width=1.0\linewidth]{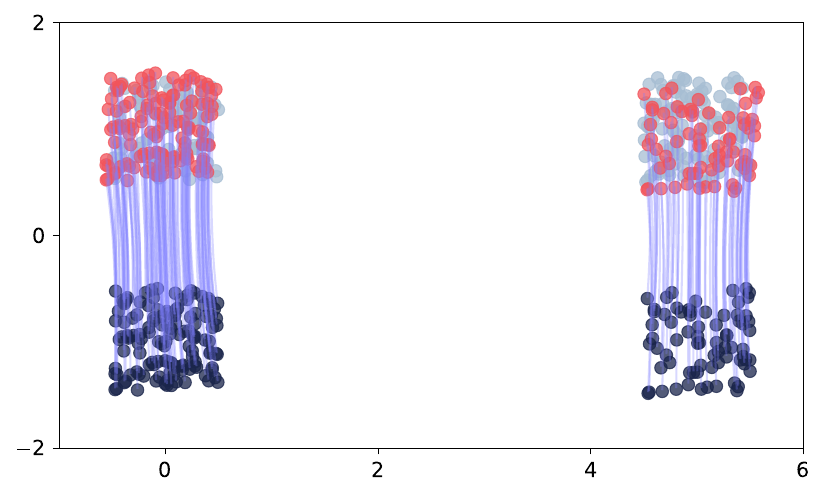}
  \caption{UOT-FM ($\tau=0.9$)}
  \label{subfig:uniform_uot-fm_09}
\end{subfigure}
\caption{Different maps on data drawn from a mixture of uniform distribution, where the density in the bottom left and the top right ($\frac{3}{5}$) is higher than in the top left and bottom right ($\frac{2}{5}$) (Appendix~\ref{app:simulated_data}). Besides the data in Figure \ref{subfig:unbalanced_data}, the first row shows results of discrete balanced OT (\ref{subfig:standard_ot}), and discrete unbalanced OT with two different degrees of unbalancedness $\tau$ (\ref{subfig:standard_ot_tau0.99}, \ref{subfig:standard_ot_tau0.9}). The second row shows the maps obtained by FM with independent coupling (\ref{subfig:uniform_fm}), OT-FM (\ref{subfig:uniform_ot-fm}), and UOT-FM (\ref{subfig:uniform_uot-fm_099}, \ref{subfig:uniform_uot-fm_09}).}
\label{fig:simulated_data}
\vspace{-3mm}
\end{figure}

\subsection{Estimation of Unbalanced Monge Maps} 
\label{subsec:estimating_umm}
Learning an unbalanced Monge map between $\mu$, $\nu$ remains to learn a balanced Monge map between $\tilde{\mu} = u\cdot\mu,\, \tilde{\nu}=v\cdot\nu$. Provided that we can sample from $\tilde{\mu}$ and $\tilde{\nu}$, this can be done with any Monge map estimator. In practice, we can produce approximate samples from the re-weighted measures using samples from the input measure $\*x_1 \dots\*x_n \sim\mu$ and $\*y_1 \dots\*y_n \sim\nu$. First, \textbf{[i]} we compute $\hat{\pi}_\mathrm{UOT} \in \mathbb{R}_+^{n\times n}$ the solution of problem~\ref{eq:unbalanced-kantorovich-problem} between $\hat{\mu}_n = \frac{1}{n}\sum_{i=1}^n\delta_{\*x_i}$ and $\hat{\nu}_n = \frac{1}{n}\sum_{i=1}^n\delta_{\*y_i}$, then \textbf{[ii]} we sample $(\tilde{\*x}_1, \tilde{\*y}_1), \dots, (\tilde{\*x}_n, \tilde{\*y}_n) \sim \hat{\pi}_\mathrm{UOT}$. Indeed, if we set $\*a := \hat{\pi}_\mathrm{UOT}\*1_n$ and $\*b := \hat{\pi}_\mathrm{UOT}^\top\*1_n$, and define  $\tilde{\mu}_n := \sum_{i=1}^n a_i \delta_{\*x_i}=p_1\sharp\hat{\pi}_\mathrm{UOT}$ and $\tilde{\nu}_n := \sum_{i=1}^n b_i \delta_{\*y_i} = p_2\sharp\hat{\pi}_\mathrm{UOT}$, these are empirical approximation of $\tilde{\mu}$ and $\tilde{\nu}$, and $\tilde{\*x}_1 \dots\tilde{\*x}_n  \sim\tilde{\mu}$ and $\tilde{\*y}_1 \dots\tilde{\*y}_n \sim\tilde{\nu}$. Such couplings $\hat{\pi}_\mathrm{UOT}$ can be estimated efficiently using entropic regularization~\citep{cuturi2013sinkhorn,sejourne2022unbalanced} (Appendix~\ref{app:entropic-regularization}). Moreover, we can learn the re-weighting functions as we have access to estimates of their pointwise evaluation: $u(\*x_i) \approx n\cdot a_i = (\diff\tilde{\mu}_n / \diff\hat{\mu}_n)(\*x_i)$ and $v(\*y_i) \approx n\cdot b_i = (\diff\tilde{\nu}_n / \diff\hat{\nu}_n)(\*y_i)$. We provide our general training procedure in Appendix~\ref{app:algorithms} and note that UOT introduces the hyperparameter $\tau$, which controls the level of unbalancedness as visualized in Figure \ref{fig:simulated_data}. The main limitation of our proposed framework revolves around the choice of $\tau$, which as discussed in \citet{séjourné2023unbalanced} is not always evident and thereby facilitates a grid search (Appendix~\ref{app:limitations}). %

\xhdr{Merits of Unbalancedness.} In addition to removing outliers and addressing class imbalances across the entire dataset, as mentioned in Section~\ref{sec:introduction}, the batch-wise training approach employed by many neural Monge map estimators makes unbalancedness a particularly favorable characteristic. Specifically, the distribution within a batch is likely not reflective of the complete source and target distributions, leading to a class imbalance at the batch level. Similarly, a data point considered normal in the overall distribution may be treated as an outlier within a batch. Consequently, the use of unbalanced Monge maps serves to mitigate the risk of suboptimal pairings within individual batches, which can help to stabilize training and speed up convergence~\citep{fatras2021unbalanced,choi2023generative}. Next, we introduce the three estimators that we employ to compute unbalanced Monge maps.

\xhdr{OT-ICNN.}
When $c(\*x,\*y) = \|\*x - \*y \|_2^2$, \citet{Brenier1987}'s Theorem states that $T^\star = \nabla \varphi^\star$, where $\varphi^\star : \Omega \to \mathbb{R}$ is convex and solves a reformulation of the dual problem~\ref{eq:dual-kantorovich-problem} 
\begin{equation}
\label{eq:semi-dual-kantorovich}
    \varphi^\star \in \arginf_{\varphi\text{ convex}} \int_\Omega \varphi(\*x) \diff\mu(\*x) + \int_\Omega \varphi^*(\*y) \diff\mu(\*y),
\end{equation}
where $\varphi^*$ denotes the convex conjugate of $\varphi$. \citet{makkuva2020optimal}  propose to solve Eq.~\ref{eq:semi-dual-kantorovich} using Input Convex Neural Networks (ICNNs)~\citep{amos2017input}, which are parameterized convex functions $\*x \mapsto \varphi_\theta (\*x)$. To avoid the explicit computation of the convex conjugate $\varphi_\theta^*$, they approximate it by parameterizing an additional ICNN $\eta_\theta$ and solve
\begin{equation}\label{eq:neural_ot}
    \supinf_{\eta_\theta \, \text{ICNN} \ \varphi_\theta \, \text{ICNN}} \mathcal{L}_\textrm{OT-ICNN}(\theta) := \int_{\Omega \times \Omega} \left(\eta_\theta(\nabla \varphi_\theta(\*x)) - \langle \*x, \nabla \varphi_\theta(\*x) \rangle -  \eta_\theta(\*y) \right) \diff\mu(\*x)\diff\nu(\*y),
\end{equation}
where an estimate of the Monge map is recovered through $T_\theta=\nabla \varphi_\theta$. We refer to this estimation procedure as OT-ICNN and use it to learn unbalanced Monge maps (UOT-ICNN) in Section~\ref{subsec:trajectory}.

\xhdr{Monge Gap.}
ICNN-based methods leverage \citet{Brenier1987}'s Theorem, that can only be applied when $c(\*x, \*y) = \|\*x - \*y\|_2^2$. Recently, \citet{uscidda2023monge} proposed a method to learn Monge maps for any differentiable cost. They define a regularizer $\smash{\mathcal{M}_\mu^c}$, called the Monge gap, that quantifies the lack of Monge optimality of a map $T : \mathbb{R}^d \to \mathbb{R}^d$. More precisely,
\begin{equation}
    \mathcal{M}_\mu^c(T) = \int_\Omega c(\*x, \*y) \diff \mu(\*x) - W_c(\mu,T\sharp\mu).
\end{equation}
From the definition, $\mathcal{M}_\mu^c(T) \geq 0$ with equality i.f.f.\ $T$ is a Monge map between $\mu$ and $T\sharp\mu$ for the cost $c$. Then, the Monge gap can be used with any divergence $\Delta$ to define the loss
\begin{equation}
    \mathcal{L}_\textrm{OT-MG}(\theta) := \Delta(T_\theta\sharp\mu,\nu) +  \mathcal{M}_\mu^c(T_\theta)
\end{equation}
which is 0 i.f.f.\ $T_\theta$ is a Monge map between $\mu$ and $\nu$ for cost $c$. We refer to the estimation procedure minimizing this loss as OT-MG and use it to learn unbalanced Monge maps (UOT-MG) in Section~\ref{subsec:perturbation}.

\xhdr{Flow Matching.}
\label{subsec:rebalancing-fm}
Continuous Normalizing Flows (CNFs) \citep{chen2018neural} are a family of continuous-time deep generative models that construct a probability path between $\mu$ and $\nu$ using the flow $(\phi_t)_{t\in[0,1]}$ induced by a neural velocity field $(v_{t,\theta})_{t\in[0,1]}$. Flow Matching (FM)~\citep{lipman2023flow,liu2022flow, albergo2023building} is a simulation-free technique to train CNFs by constructing probability paths between individual data samples $\*x_0 \sim \mu$, $\*x_1 \sim \nu$, and minimizing
\begin{equation}
    \mathcal{L}_{\mathrm{FM}}(\theta) = \mathbb{E}_{t, (\*x_0, \*x_1) \sim \pi_\mathrm{ind}}||v_\theta(t, \*x_t) - (\*x_1 - \*x_0)||_2^2,
\end{equation}
where $\pi_\mathrm{ind} = \mu \otimes \nu$. When this loss is zero, \citet[Theorem 1]{lipman2023flow} states that $\phi_1$ is push-forward map between $\mu$ and $\nu$, namely $\phi_1\sharp\mu = \nu$. While here FM yields individual straight paths between source and target pairs, the map $\phi_1$ is not a Monge map. OT-FM~\citep{pooladian2023multisample, tong2023improving} suggests to use an OT coupling $\pi_\mathrm{OT}$ instead of $\pi_\mathrm{ind}$ in the FM loss, which in practice is approximated batch-wise. It can then be shown that $\psi_1$ approximates a Monge map asymptotically \citep[Theorem 4.2]{pooladian2023multisample}. Thanks to Proposition~\ref{prop:rebalancing-monge-problem}, we could extend OT-FM to the unbalanced Monge map setting (UOT-FM) by following Algorithm~\ref{alg:unbalanced-mong-maps} and rescaling the measures batch-wise before applying OT-FM. In this special case however, both the unbalanced rescaling \textit{and} the coupling computation can be done in one step by leveraging the unbalanced coupling $\pi_\mathrm{UOT}$, which again is approximated batch-wise. The full algorithm is described in Appendix~\ref{app:algorithms}.

\section{Related Work}\label{sec:related_work}

While the majority of literature in the realm of neural OT considers the balanced setting \citep{makkuva2020optimal,korotin2021do,uscidda2023monge,tong2023improving,korotin2022neural}, only a few approaches have been proposed to loosen the strict marginal constraints. \citet{gazdieva2023partial} consider partial OT, a less general problem setting than unbalanced OT. \citet{yang2019scalable} propose a GAN-based approach for unbalanced OT plans (as opposed to deterministic maps) and very recently, \citet{choi2023generative} employ a methodology based on the semi-dual formulation for generative modeling of images. Finally, \citet{lubeck2022neural} introduced a way to incorporate unbalancedness in ICNN-OT. In the following, we highlight the main differences to our approach.

First, our proposed method can be applied to \emph{any} neural Monge Map estimator, while the method proposed in \citet{lubeck2022neural} requires the explicit parameterization of the inverse Monge map. While this requirement is satisfied in ICNN-OT, more flexible and performant neural Monge map estimators like MG-OT \citep{uscidda2023monge} and OT-FM \citep{pooladian2023multisample,tong2023improving} do not model the inverse Monge map. Second, \citet{lubeck2022neural} suggest rescaling the distributions based on the solution of a discrete unbalanced OT coupling between the push-forward of the source distribution and the target distribution, while we consider unbalancedness between the source and the target distribution. This entails that their rescaled marginals are not optimal (Proposition~\ref{prop:rebalancing-monge-problem}).
Third, our approach is computationally more efficient, as our method requires the computation of at most one discrete unbalanced OT plan, while the algorithm proposed in \citet{lubeck2022neural} includes the computation of two discrete unbalanced OT plans. Related work for each of the specific domain translation tasks is discussed in Appendix~\ref{app:related_work} due to space restrictions.

\section{Experiments}
We demonstrate the importance of learning unbalanced Monge maps on three different domain translation tasks leveraging three different (balanced) Monge map estimators, which showcases the flexibility of our proposed method. To start with, we demonstrate the necessity of incorporating unbalancedness when performing trajectory inference on single-cell RNAseq data, which we use ICNN-OT for. Subsequently, we improve the predictive performance of modeling cellular responses to perturbations with an unbalanced Monge map building upon MG-OT. Both of these tasks can be formulated as an unpaired single-cell to single-cell translation task. Finally, we show in Section~\ref{subsec:image-translation} that OT-FM performs competitively on unpaired image-to-image translation tasks, while its unbalanced formulation (UOT-FM) further improves performance and helps to preserve characteristic input features. We utilize entropy-regularized UOT with $\varepsilon=0.01$ (Appendix~\ref{app:entropic-regularization}) and peform a small grid search over $\tau$. Note that while we train estimators to learn an unbalanced Monge map, i.e. Monge maps between the rescaled distribution $\Tilde{\mu}$ and $\Tilde{\nu}$, we evaluate them on balanced distributions $\mu$ and $\nu$.

\subsection{Single-Cell Trajectory Inference}
\label{subsec:trajectory}

\begin{wrapfigure}{o}{0.57\textwidth}
\vspace{-9mm}
    \centering
    \begin{subfigure}{0.28\textwidth}\label{subfig:ot-icnn}
      \centering
      \includegraphics[width=1\linewidth]{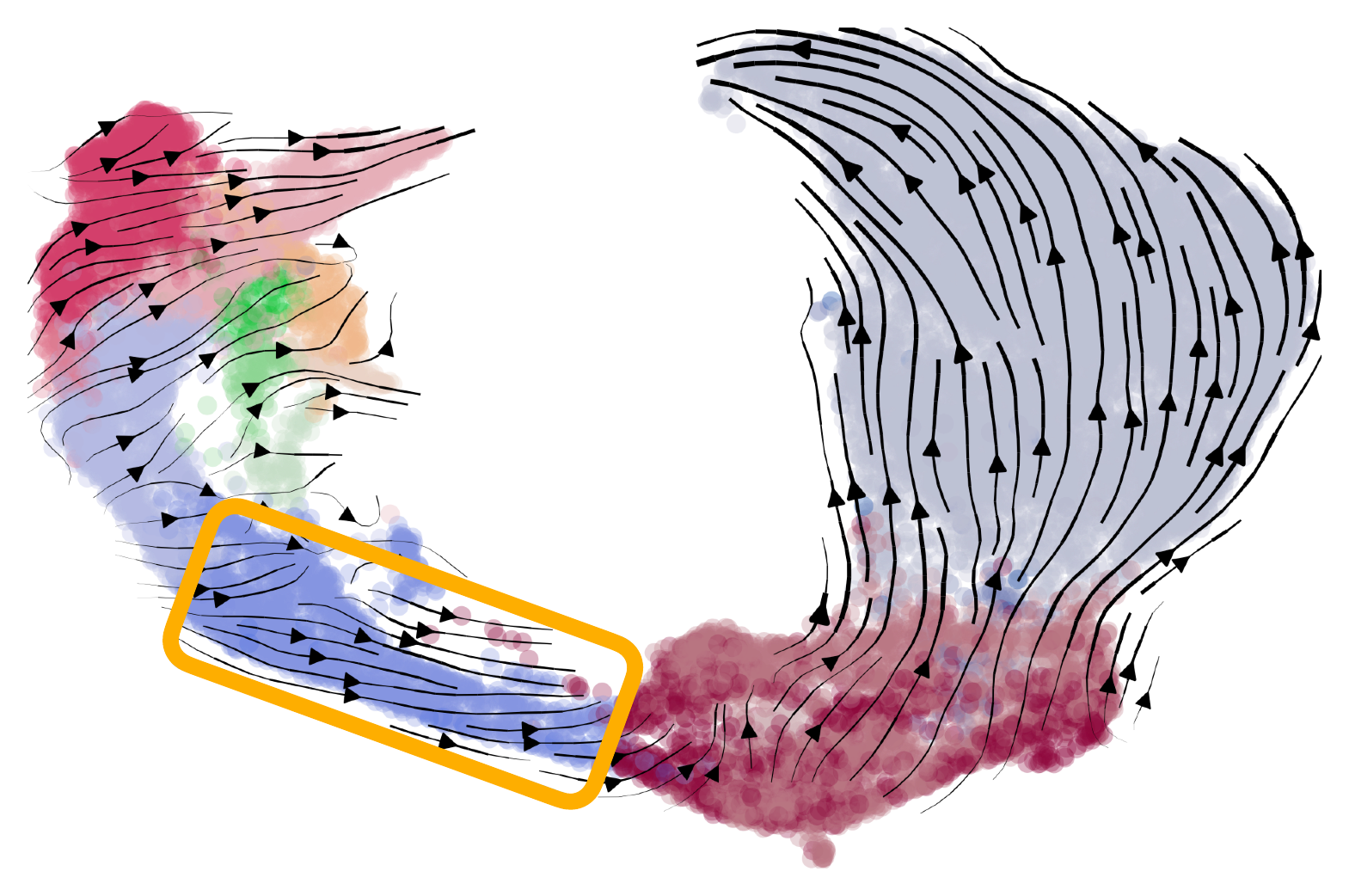}
      \caption{OT-ICNN}
      \label{subfig:velstream_not}
    \end{subfigure}
    \begin{subfigure}{0.28\textwidth}\label{subfig:uot-icnn}
      \centering
      \includegraphics[width=1\linewidth]{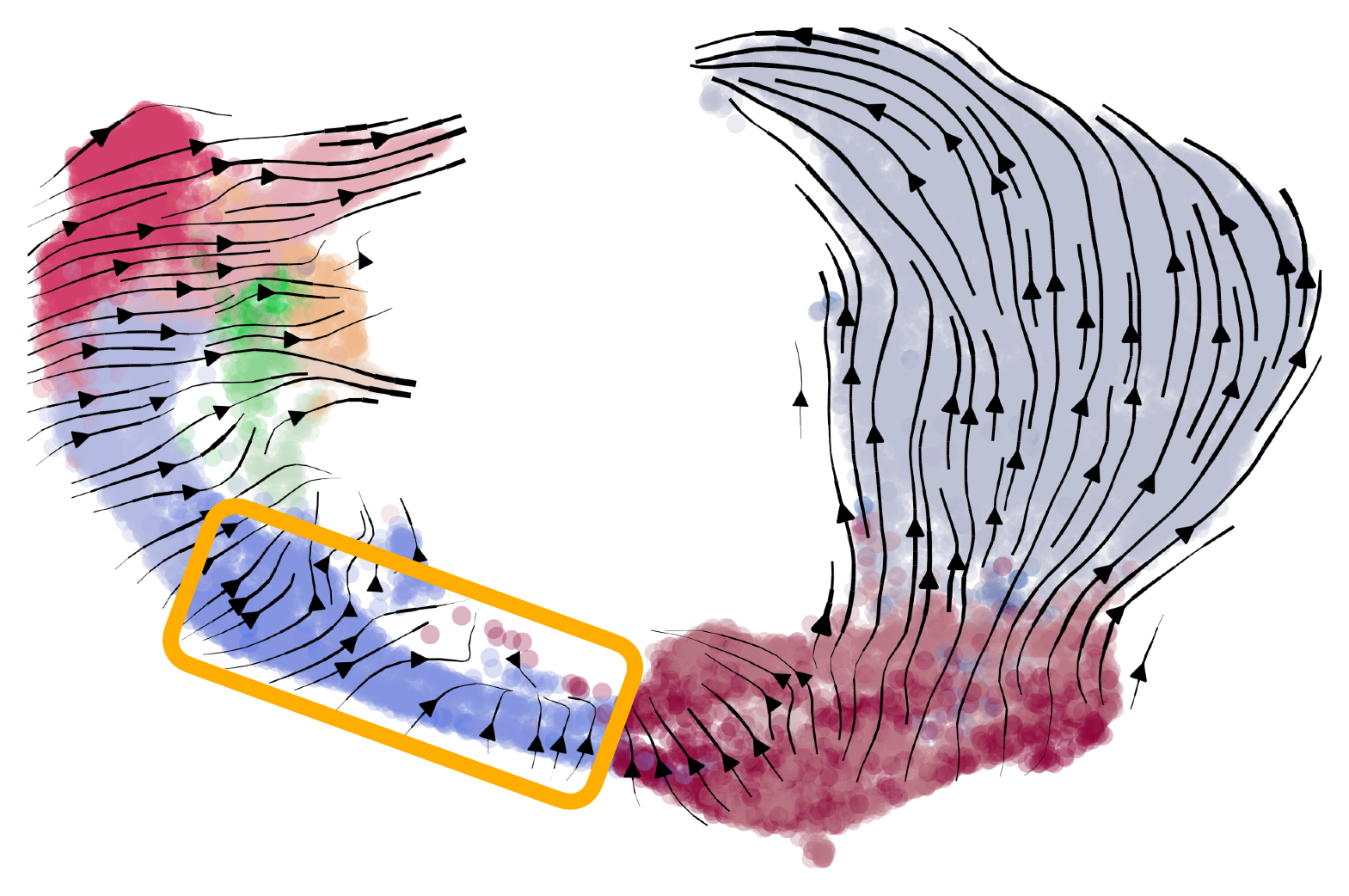}
      \caption{UOT-ICNN}
      \label{subfig:velstream_bnot}
    \end{subfigure}
    \caption{Velocity stream embedding plots (Appendix~\ref{app:velstream}). The orange box highlights the direction of Ngn3 EP cells. With OT-ICNN these move to the ``right'', which contradicts biological ground truth. This is due to the distribution shift shown in Appendix~\ref{app:pancreas_data} and demonstrates the need to incorporate unbalancedness.
    }
    \label{fig:velstreams}
\vspace{-\intextsep}
\end{wrapfigure}
OT has been successfully applied to model cell trajectories in the discrete setting \citep{schiebinger2019}.
The need for scalability motivates the use of neural Monge maps. Hence, we leverage OT-ICNN to model the evolution of cells on a single-cell RNA-seq dataset comprising measurements of the developing mouse pancreas at embryonic days 14.5 and 15.5  (Appendix~\ref{app:pancreas_data}). To evaluate how accurately OT-ICNN recovers biological ground truth,  we rely on the established single-cell trajectory inference tool CellRank~\citep{lange2022cellrank}.

The developing pancreas can be divided into two major cell lineages, the endocrine branch (EB), and the non-endocrine branch (NEB) (Appendix~\ref{app:cell_type_transitions}). 
It is known that once a cell has committed to either of these lineages, it won't develop into a cell belonging to the other lineage \citep{bastidas2019comprehensive}. The Ngn3 EP cluster is a population of early development in the EB, and hence cells belonging to this population are similar in their gene expression profile to cells in the NEB branch. Thus, when measuring the performance of OT-ICNN, we consider this cell population isolatedly. We evaluate OT-ICNN and UOT-ICNN by considering the predicted evolution of a cell.

\begin{wraptable}{o}{0.4\textwidth}
\vspace{-\intextsep}
        \caption{Evaluation of unbalancedness in OT-ICNN based on correct cell type transitions.}
        \vspace{-2mm}
	\label{table:transition_probs}
	\centering
        \begin{adjustbox}{width=0.4\textwidth}
	\begin{tabular}{cccc}
            \toprule
            & \multicolumn{3}{c}{Correct transitions}                   \\
            \cmidrule(lr){2-4}
            Model  & EB & \textbf{Ngn3 EP} & NEB\\
		\toprule
		  OT-ICNN & 54\% &7\%  & 67\%\\[0.3em]
		UOT-ICNN & {\textbf{57\%}} & \textbf{69\%} & {\textbf{85\%}}\\
		\bottomrule
    \end{tabular}
    \end{adjustbox}
\vspace{-\intextsep}
\end{wraptable}
We report the mean percentage of correct cell transitions for each lineage in Table~\ref{table:transition_probs}, with full results in Appendix~\ref{app:scti-competing}. While UOT-ICNN consistently outperforms its balanced counterpart, the effect of the unbalanced neural Monge map estimator is particularly significant in the Ngn3 EP population. This improvement is visually confirmed in Figure~\ref{fig:velstreams}. To demonstrate the biological relevance of our proposed method, we compare the results attained with OT-ICNN and UOT-ICNN with established trajectory inference methods in single-cell biology (Appendix~\ref{app:scti-competing}).

\begin{wrapfigure}{o}{0.55\textwidth}
\vspace{-\intextsep}
    \centering
    \includegraphics[width=1\linewidth]{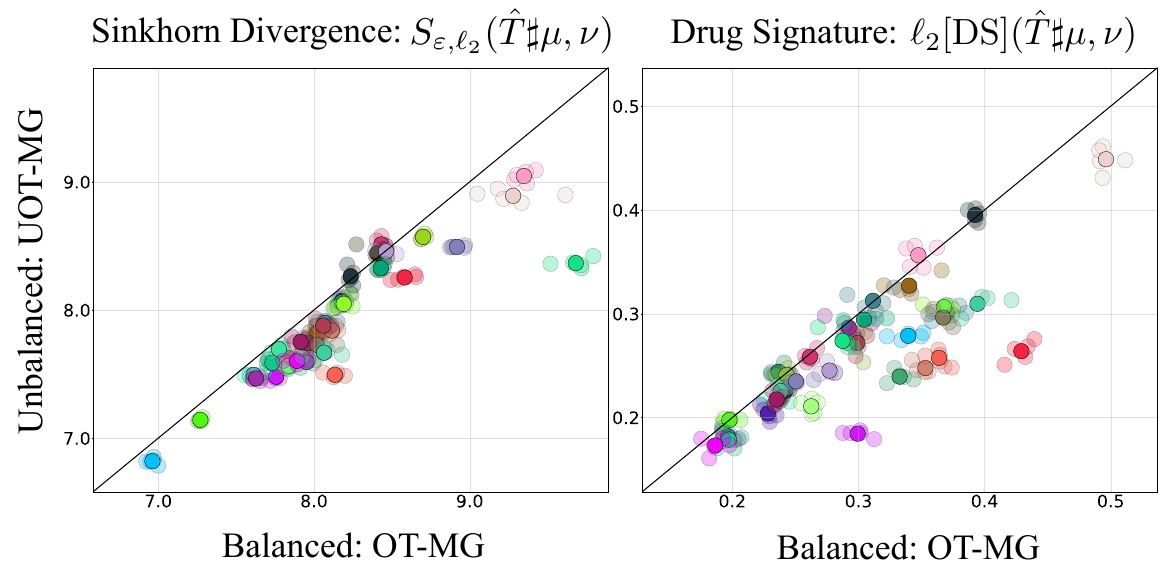}
    \vspace{-5mm}
    \caption{Fitting of a transport map $\hat{T}$ to predict the responses of cell populations to cancer treatments on 4i (upper plot), using balanced (OT-MG) and unbalanced Monge maps (UOT-MG) fitted with the Monge gap. A point below the diagonal indicates that unbalancedeness improves performance.}
    \label{fig:perturbation}
\vspace{-4mm}
\end{wrapfigure}

\subsection{Modeling Perturbation Responses}
\label{subsec:perturbation}

In-silico prediction of cellular responses to drug perturbations is a promising approach to accelerate the development of drugs. Neural Monge maps have shown promising results in predicting the reaction of cells to different drugs~\citep{bunne2021learning}. Since \citet{uscidda2023monge} show that OT-MG outperforms OT-ICNN in this task, we continue this line of research by applying unbalanced OT-MG (UOT-MG) to predict cellular responses to 35 drugs from data profiled with 4i technology~\citep{gut2018multiplexed}. We leverage distribution-level metrics to measure the performance of a map $\smash{\hat{T}}$ modeling the effect of a drug. More precisely, given a metric $\Delta$, we keep for each drug a batch of unseen control cells $\mu_\textrm{test}$ and unseen treated cells $\nu_\textrm{test}$ and compute $\smash{\Delta(\hat{T}\sharp\mu_\textrm{test}, \nu_\textrm{test})}$.

We measure the predictive performances using the Sinkhorn divergence~\citep{feydy2018interpolating} and the L2 drug signature~\citep[Sec. 5.1]{bunne2021learning}. Figure~\ref{fig:perturbation} shows that adding unbalancedness through UOT-MG improves the performances for almost all 35 drugs, w.r.t.\ to both metrics. Each scatter plot displays points $z_i = (x_i, y_i)$ where $y_i$ is the performance obtained with UOT-MG and $x_i$ that of OT-MG, on a given treatment and for a given metric. A point below the diagonal $y=x$ refers to an experiment in which using an unbalanced Monge map improves performance. We assign a color to each treatment and plot five runs, along with their mean (the brighter point). For a given color, higher variability of points along the $x$ axis means that UOT-MG is more stable than OT-MG, and vice versa.

\begin{figure}[t]
\centering
\vspace{-8mm}
\includegraphics[width=1\linewidth]{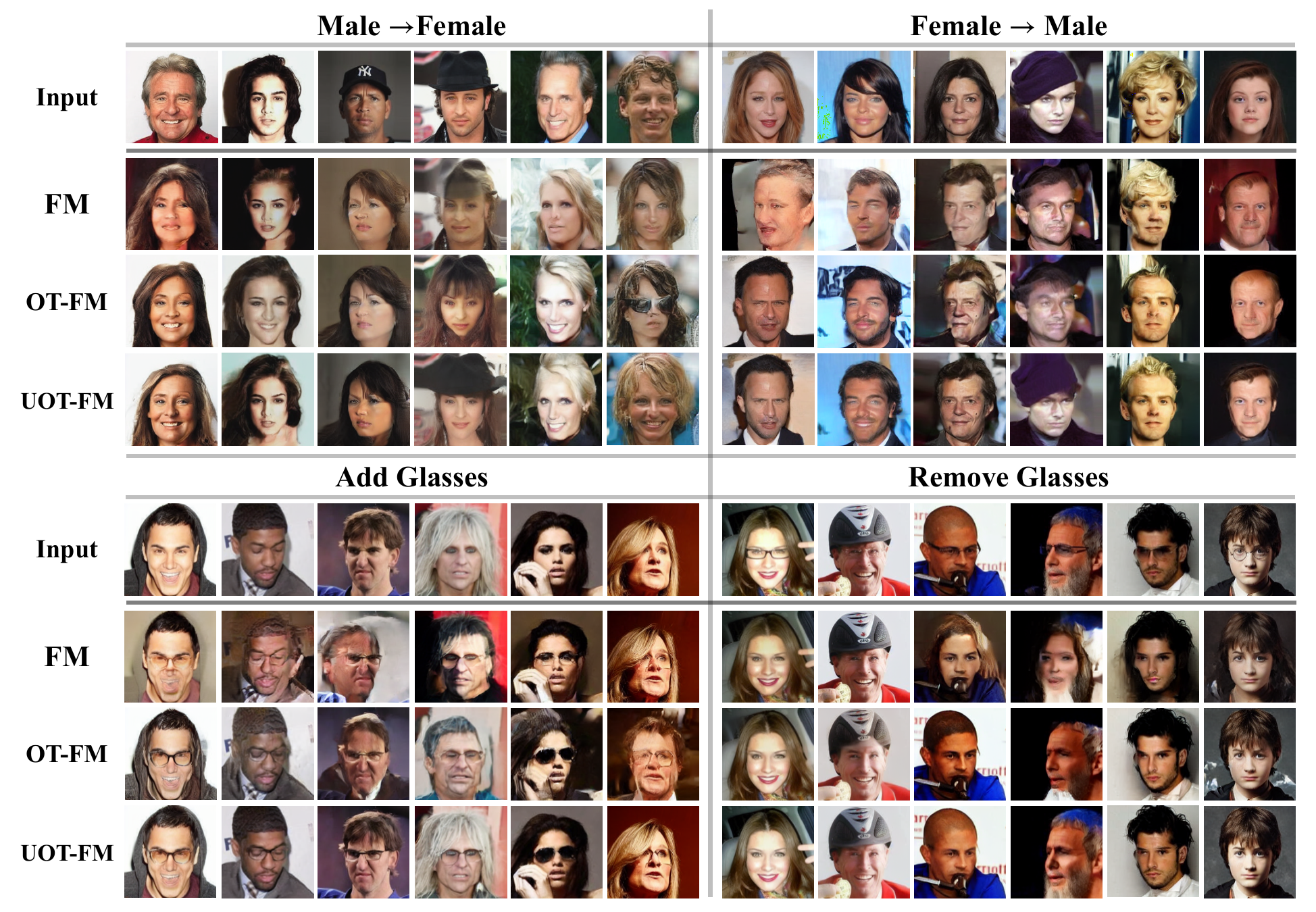}
\vspace{-6mm}
\caption{CelebA 256x256 translated test samples with FM, OT-FM, and UOT-FM.}
\label{fig:celeba-samples}
\vspace{-6mm}
\end{figure}

\subsection{Unpaired Image Translation}
\label{subsec:image-translation}

\xhdr{Metrics.} To evaluate the generative performance in the image translation settings, we employ the Fréchet inception distance (FID)~\citep{heusel2018gans}.
In image translation, it is generally desirable to preserve certain attributes across the learned mapping. However, FID computed on the whole dataset does not take this into account. Hence, we also compute FID attribute-wise to evaluate how well relevant input features are preserved.
\begin{wraptable}{r}{0.45\textwidth}
\vspace{-3mm}
\caption{FID on EMNIST \textit{digits $\rightarrow$ letters}}\label{tab:emnist}
\vspace{-1mm}
\centering
\begin{adjustbox}{width=0.45\textwidth}
\begin{tabular}{lcccc}
\toprule
\bf Method &\bf \textit{0 $\rightarrow$ O}  &\bf \textit{1 $\rightarrow$ I}  &\bf \textit{8 $\rightarrow$ B} &\bf Average\\
\midrule
FM       &30.1     &21.2              &122.8              & 58.0\\ 
OT-FM       &\textbf{9.2}     &18.6              &83.1              & 36.9\\ 
UOT-FM      &12.5              &\textbf{13.9}     &\textbf{42.6}     & \textbf{23.0}\\
\bottomrule
\end{tabular}
\end{adjustbox}
\vspace{-6mm}
\end{wraptable} Moreover, we consider the transport cost (T-Cost) $||\psi_1(\*x_0) - \*x_0||_2$ induced by the learned flow $\psi_1$. Full details are described in Appendix~\ref{app:image-metrics}.

\xhdr{EMNIST.} To illustrate why unbalancedness can be crucial in unpaired image translation, we compare the performance of FM, OT-FM, and UOT-FM on the EMNIST~\citep{cohen2017emnist} dataset translating the $digits \ \{0, 1, 8\}$  to $letters \ \{O, I, B\}$. We select this subset to obtain a desired class-wise mapping $\{0 {\rightarrow} O,1{\rightarrow} I, 8{\rightarrow} B\}$. As seen in Appendix~\ref{app:img-datasets} samples of class $B$ and $I$ are underrepresented in the target distribution.
Because of this class imbalance, not all digits can be mapped to the correct corresponding class of letters. Figure \ref{fig:emnist-concept} sketches how unbalanced Monge maps can alleviate this restriction.
Indeed, Table~\ref{tab:emnist} confirms this behavior numerically, while Figure~\ref{fig:emnist-samples} visually corroborates the results. The difference is especially striking for mapping class $8$ to $B$, where the largest distribution shift between source and target occurs. 
\begin{wrapfigure}{o}{0.45\textwidth}
\vspace{-9mm}
    \centering
    \includegraphics[width=0.45\textwidth]{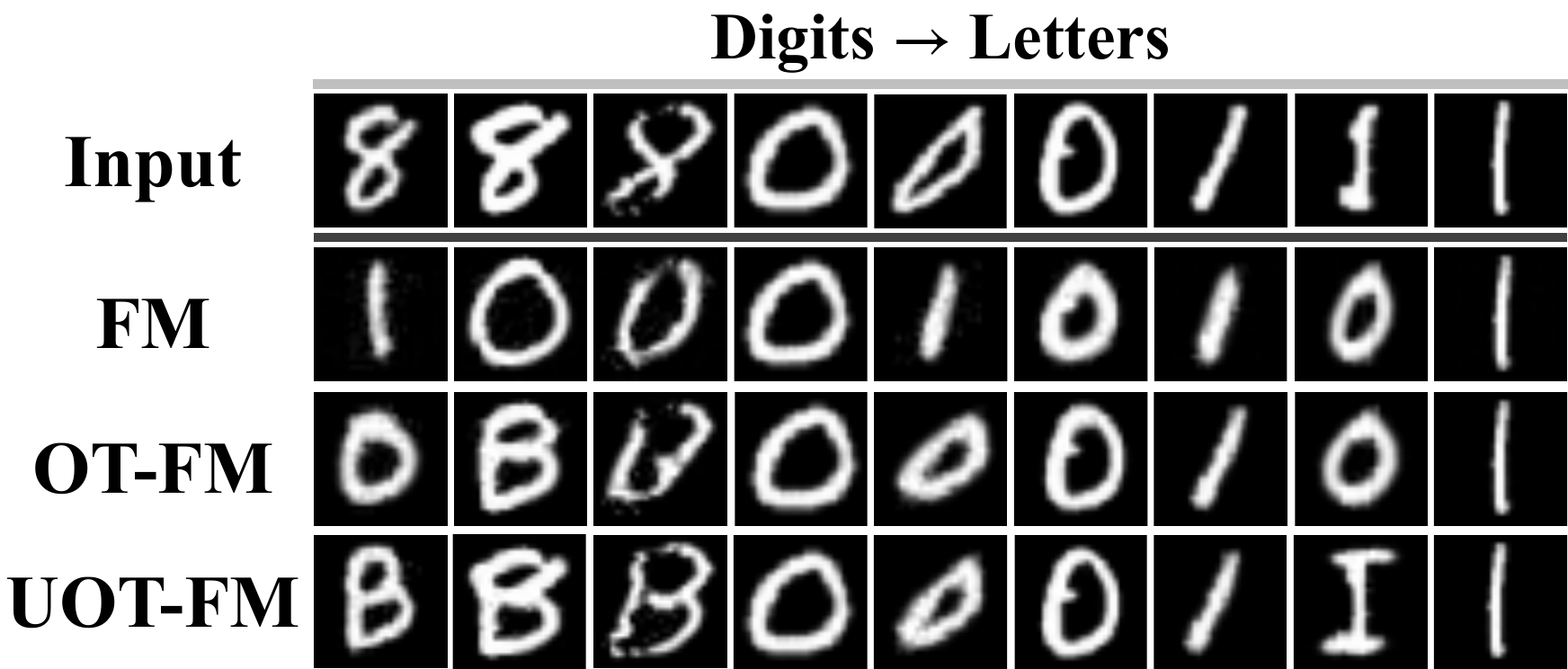}
    \caption{Samples from EMNIST translating $digits \rightarrow letters$ with different FM methods.}
    \label{fig:emnist-samples}
\vspace{-4mm}
\end{wrapfigure}While this class imbalance is particularly noticeable in the considered task, these distribution-level discrepancies can be present in any image translation task, making unbalancedness crucial to obtain meaningful mappings with OT-FM.

\xhdr{CelebA.} To demonstrate the applicability of our method on more complex tasks in computer vision, we benchmark on four translation tasks in the CelebA dataset \citep{liu2015deep}, namely $Male \rightarrow Female$, $Female \rightarrow Male$, \textit{Remove Glasses}, and \textit{Add Glasses}. Here, the distribution shift occurs between image attributes. For example, when translating from $Male$ to $Female$, samples of males with hats ($10.1\%$) significantly outnumber female ones ($3.1\%$). Roughly speaking, this implies that when satisfying the mass conservation constraint, over half of the males with hats will be mapped to females without one. To compare to established methods, we use a common setup in high-dimensional image translation \citep{torbunov2022uvcgan, nizan2020breaking, zhao2021unpaired} where images are cropped and reshaped to 256x256. Due to computational reasons, we employ latent FM~\citep{dao2023flow} leveraging a pretrained Stable Diffusion~\citep{rombach2022} variational auto-encoder (VAE), which compresses images from $H\times W\times C$ to $\frac{H}{8}\times \frac{W}{8}\times4$. We run FM, OT-FM, and UOT-FM in latent space, based on which we also compute the minibatch OT couplings. We compare UOT-FM against FM, OT-FM, CycleGAN~\citep{CycleGAN2017}, and UVCGAN~\citep{torbunov2022uvcgan}, which achieves current state-of-the-art results. While UOT-FM only requires training one network and the selection of one hyperparameter, established methods require more complex model and network choices as described in Appendix~\ref{app:related_work_imgtranslation}.

We report generative performance as well as transport cost results in Table~\ref{tab:celeba256}. UOT-FM outperforms CycleGAN and OT-FM across all tasks with respect to the FID score while lowering the learned transport cost. Table~\ref{tab:celeba256-attribute} reports the attribute-wise FID between different FM methods. As expected, unbalancedness improves performance substantially. To confirm the superior performance of UOT-FM qualitatively, Figure~\ref{fig:celeba-samples} visualizes translated samples for FM, OT-FM, and UOT-FM. To demonstrate the strong performance of UOT holding in pixel space, we follow \citet{korotin2022neural} and downsample images to 64x64 to evaluate FM, OT-FM, and UOT-FM translating $Male \rightarrow Female$. We benchmark UOT against two principled approaches for unpaired image translation, namely NOT~\citep{korotin2022neural} and CycleGAN. Appendix~\ref{app:celeba-pixel-space} confirms the superiority of UOT-FM over OT-FM. These convincing results put UOT-FM forward as a new principled approach for unpaired image translation. Lastly, UOT-FM also improves performance with a low-cost solver and fewer function evaluations, which allows for higher translation quality on a compute budget (Appendix \ref{app:celeba-nfe}).

\begin{table}[t]
\centering
\caption{Comparison of results on CelebA 256x256 measured by FID and transport cost (T-Cost) over different tasks. Results marked with $\ast$ are taken from \citet{torbunov2022uvcgan}.}\label{tab:celeba256}
\vspace{-2mm}
\begin{tabular}{lcccccccc}
\toprule
Method  & \bf FID & \bf T-Cost  & \bf FID &\bf T-Cost  & \bf FID & \bf T-Cost  & \bf FID &\bf T-Cost\\
\midrule
&   \multicolumn{2}{c}{\textit{Male $\rightarrow$ Female}}&   \multicolumn{2}{c}{\textit{Female $\rightarrow$ Male}}&   \multicolumn{2}{c}{\textit{Remove Glasses}}&   \multicolumn{2}{c}{\textit{Add Glasses}}\\
\cmidrule(lr){2-3} \cmidrule(lr){4-5} \cmidrule(lr){6-7} \cmidrule(lr){8-9}
UVCGAN          & \ \ {\textbf{9.6}}\mbox{$^\ast$}                  &-          & \ \ \textbf{13.9}\mbox{$^\ast$}                &-          & \ \ \textbf{14.4}\mbox{$^\ast$}                  &-          & \ \textbf{13.6}\mbox{$^\ast$}                 &-\\
CycleGAN        & \ \ 15.2\mbox{$^\ast$}                  &-        & \ \ 22.2\mbox{$^\ast$}                    &-        & \ \ 24.2\mbox{$^\ast$}                    &-        & \ 19.8\mbox{$^\ast$}                  &-\\
FM        &17.5                  &89.2       &18.6                  &91.2     &25.6                  &74.0        &30.5                  &73.9\\
OT-FM           &14.6          &59.3           &14.8         &59.7     &20.9       &47.1        &20.1                  &45.7\\
UOT-FM          &\underline{13.9}          &\textbf{56.8}          &\underline{14.3}        &\textbf{56.4} &\underline{18.5}                  &\textbf{44.3}        &\underline{18.3}                  &\textbf{42.0}\\
\bottomrule
\end{tabular}
\vspace{-0mm}
\end{table}

\begin{table}[t]
\centering
\caption{CelebA 256x256 attribute-wise FID on different tasks for different FM methods.}\label{tab:celeba256-attribute}
\vspace{-2mm}
\begin{adjustbox}{width=1.0\textwidth}
\begin{tabular}{cccccccccccccccccccccccc|cccccccccccccccccccc}
\toprule
\multicolumn{4}{p{0.5cm}@{}}{} & \multicolumn{20}{c}{\centering \textit{Male $\rightarrow$ Female}} & \multicolumn{20}{|c}{\centering \textit{Female $\rightarrow$ Male}}\\
\cmidrule(lr){5-24} \cmidrule(lr){25-44}
\multicolumn{4}{p{0.5cm}@{}}{Method} &\multicolumn{5}{p{1.2cm}}{\centering Glasses}  &\multicolumn{5}{p{1.35cm}}{\centering Hat}  &\multicolumn{5}{p{1.55cm}}{\centering Gray hair} &\multicolumn{5}{p{1.35cm}}{\centering \bf Average} &\multicolumn{5}{|p{1.2cm}}{\centering Glasses}  &\multicolumn{5}{p{1cm}}{\centering Hat}  &\multicolumn{5}{p{1.55cm}}{\centering Gray hair} &\multicolumn{5}{p{1.35cm}}{\centering \bf Average}\\
\midrule
\multicolumn{4}{l}{FM} &\multicolumn{5}{c}{66.6}         &\multicolumn{5}{c}{117.0}          &\multicolumn{5}{c}{71.7}          &\multicolumn{5}{c}{85.1}         &\multicolumn{5}{|c}{40.2}         &\multicolumn{5}{c}{86.2}          &\multicolumn{5}{c}{60.5}          &\multicolumn{5}{c}{62.3}\\ 
\multicolumn{4}{l}{OT-FM}       &\multicolumn{5}{c}{63.1}      &\multicolumn{5}{c}{111.5}     &\multicolumn{5}{c}{64.6}      &\multicolumn{5}{c}{79.7}      &\multicolumn{5}{|c}{38.8}      &\multicolumn{5}{c}{74.1}     &\multicolumn{5}{c}{54.0}      &\multicolumn{5}{c}{55.6}\\
\multicolumn{4}{l}{UOT-FM}      &\multicolumn{5}{c}{\textbf{62.6}}      &\multicolumn{5}{c}{\textbf{108.5}}      &\multicolumn{5}{c}{\textbf{62.7}}      &\multicolumn{5}{c}{\textbf{77.9}}      &\multicolumn{5}{|c}{\textbf{38.2}}      &\multicolumn{5}{c}{\textbf{72.4}}     &\multicolumn{5}{c}{\textbf{51.1}}     &\multicolumn{5}{c}{\textbf{53.9}}\\
\midrule
\multicolumn{4}{p{0.3cm}@{}}{} &   \multicolumn{20}{c}{\textit{Remove Glasses}} & \multicolumn{20}{|c}{\textit{Add Glasses}}\\
\cmidrule(lr){5-24} \cmidrule(lr){25-44}
\multicolumn{4}{p{0.5cm}@{}}{} &\multicolumn{4}{m{0.65cm}}{Male}  &\multicolumn{4}{m{1.3cm}}{\centering Female} &\multicolumn{4}{m{0.85cm}}{\centering Hat}  &\multicolumn{4}{@{}m{1.35cm}@{}}{Gray hair} &\multicolumn{4}{m{1.35cm}}{\centering \bf Average} &\multicolumn{4}{|m{0.65cm}}{Male}  &\multicolumn{4}{m{1.1cm}}{\centering Female} &\multicolumn{4}{m{0.75cm}}{\centering Hat}  &\multicolumn{4}{@{}m{1.35cm}@{}}{Gray hair} &\multicolumn{4}{m{1.35cm}}{\centering \bf Average}\\
\midrule
\multicolumn{4}{l}{FM}          &\multicolumn{4}{c}{46.9}          &\multicolumn{4}{c}{47.7}          &\multicolumn{4}{c}{110.6}          &\multicolumn{4}{c}{89.1}                    &\multicolumn{4}{c}{74.3}    &\multicolumn{4}{|c}{19.6}          &\multicolumn{4}{c}{43.2}          &\multicolumn{4}{c}{46.6}          &\multicolumn{4}{c}{39.7}          &\multicolumn{4}{c}{37.3}\\
\multicolumn{4}{l}{OT-FM}          &\multicolumn{4}{c}{42.3}          &\multicolumn{4}{c}{40.3}          &\multicolumn{4}{c}{99.5}          &\multicolumn{4}{c}{83.1}          &\multicolumn{4}{c}{66.3}          &\multicolumn{4}{|c}{12.0}          &\multicolumn{4}{c}{36.9}          &\multicolumn{4}{c}{44.4}          &\multicolumn{4}{c}{27.4}          &\multicolumn{4}{c}{30.2}\\
\multicolumn{4}{l}{UOT-FM}          &\multicolumn{4}{c}{\textbf{40.5}}          &\multicolumn{4}{c}{\textbf{37.3}}          &\multicolumn{4}{c}{\textbf{89.6}}          &\multicolumn{4}{c}{\textbf{77.0}}          &\multicolumn{4}{c}{\textbf{61.1}}          &\multicolumn{4}{|c}{\textbf{11.4}}          &\multicolumn{4}{c}{\textbf{35.0}}          &\multicolumn{4}{c}{\textbf{39.7}}          &\multicolumn{4}{c}{\textbf{25.5}}          &\multicolumn{4}{c}{\textbf{27.9}}\\
\bottomrule
\end{tabular}
\end{adjustbox}
\vspace{-3mm}
\end{table}

\begin{wraptable}{r}{0.35\textwidth}
\vspace{-\intextsep}
\caption{OT-FM compared to UOT-FM with different $\tau$ on CIFAR-10 measure by FID and transport cost.}\label{tab:emnist}
\vspace{-2mm}
\centering
\begin{adjustbox}{width=0.35\textwidth}
\begin{tabular}{lcc}
\toprule
\bf Method & \textbf{FID}  & \textbf{T-Cost}\\
\midrule
OT-FM                    &3.59     &104.53\\ 
UOT-FM ($\tau=0.99$)     &\underline{3.47}              &103.99 \\
UOT-FM ($\tau=0.97$)     &\textbf{3.42}              &\underline{102.98} \\
UOT-FM ($\tau=0.95$)     &3.79              &\textbf{102.17} \\
\bottomrule
\end{tabular}
\end{adjustbox}
\vspace{-\intextsep}
\end{wraptable}

\xhdr{CIFAR-10 image generation.}
\label{subsec:img-gen}
The experiments above show that unbalanced neural Monge maps are mostly favorable over their balanced counterpart when translating between two data distributions.
When translating from a source \textit{noise}, i.e. a parameterized distribution that is easy to sample from, this task is referred to as \textit{generative modeling}. Hence, we investigate whether UOT-FM also improves upon OT-FM in an image generation task. We benchmark on the CIFAR-10~\citep{cifar10} dataset using the hyperparameters reported in ~\citet{tong2023improving} and show that also here UOT-FM improves upon OT-FM w.r.t. FID score. Moreover, we plot FID and transport cost convergence over training in Appendix~\ref{app:cifar-conv} and generated samples (\ref{app:cifar-generated}).

\vspace{-2mm}
\section{Conclusion}
\vspace{-1mm}
In this work, we present a new approach to incorporate unbalancedness into \textit{any} neural Monge map estimator. We provide theoretical results that allow us to learn an unbalanced Monge map by showing that this is equivalent to learning the balanced Monge map between two rescaled distributions. This approach can be incorporated into \emph{any} neural Monge map estimator. We demonstrate that unbalancedness enhances the performance in various unpaired domain translation tasks. In particular, it improves trajectory inference on time-resolved single-cell datasets and the prediction of perturbations on the cellular level. Moreover, we demonstrate that while OT-FM performs competitively in natural image translation, unbalancedness (UOT-FM) further elevates these results. While we use the squared Euclidean distance for all experiments, an interesting future direction lies in the exploration of more meaningful costs tailored specifically towards the geometry of the underlying data space.

\section{Reproducibility}
For our new findings, we supply complete proofs in Appendix~\ref{app:proofs}. We detail the proposed algorithms in Appendix~\ref{app:algorithms} and provide all implementation details for reproducing the results of all three tasks reported in this work in Appendix~\ref{app:implementation_details}. Additionally, the code to reproduce our experiments can be found at \url{https://github.com/ExplainableML/uot-fm}.

\section{Acknowledgements}
The authors would like to thank Alexey Dosovitskiy, Sören Becker, Alessandro Palma, Alejandro Tejada, and Phillip Weiler for insightful and encouraging discussions as well as their valuable feedback. Luca Eyring and Dominik Klein thank the European Laboratory for Learning and Intelligent Systems (ELLIS) PhD program for support. Zeynep Akata acknowledges partial funding by the ERC (853489 - DEXIM) and DFG (2064/1 – Project number 390727645) under Germany’s Excellence Strategy. Fabian Theis acknowledges partial founding by the ERC (101054957 - DeepCell), consulting for Immunai Inc., Singularity Bio B.V., CytoReason Ltd, Cellarity, and an ownership interest in Dermagnostix GmbH and Cellarity. Views and opinions expressed are however those of the authors only and do not necessarily reflect those of the European Union or the European Research Council. Neither the European Union nor the granting authority can be held responsible for them.

\bibliography{main.bib}
\bibliographystyle{iclr2024_conference}

\clearpage
\appendix
\section{Proofs}
\label{app:proofs}
In this section, we prove Propositon~\ref{prop:rebalancing-monge-problem}, which we repeat here.
\PropReBalance*
\begin{proof}
\textbf{Proof of point \ref{prop:rebalancing-monge-problem-1}.} 
We remember that
\begin{equation}
    \pi_\mathrm{UOT} \in \argmin_{\pi \in \+M(\+X,\+Y)} \int_{\Omega\times\Omega} c(\*x,\*y) \diff \pi(\*x,\*y) + \lambda_1\D_\phi(p_1\sharp \pi|\mu) +\lambda_2\D_\phi(p_2\sharp \pi|\mu)\, 
\end{equation}
so since we assume that $\phi'_\infty = +\infty$, the terms $\D_\phi(p_1\sharp \pi|\mu)$ and $\D_\phi(p_2\sharp \pi|\nu)$ are finite i.f.f.\ $p_1\sharp \pi$ has a density w.r.t.\ $\mu$ and $p_2\sharp \pi$ has a density w.r.t.\ $\nu$. Therefore, it exists $u, v : \mathbb{R}^d \to \mathbb{R}^+$ s.t. $\tilde{\mu} = p_1 \sharp \pi_\mathrm{UOT} = u \cdot \mu$ and $\tilde{\mu} = p_2 \sharp \pi_\mathrm{UOT} = v \cdot \nu$. Moreover, $\tilde{\mu}$ and $\tilde{\nu}$ have the same total mass, since by applying Fubini's Theorem twice, one has
\begin{equation}
    \int_\Omega \diff \pi_\mathrm{UOT}(\*x, \*y) = \int_\Omega \diff \tilde{\mu}(\*x) = \int_\Omega \diff \tilde{\nu}(\*y)
\end{equation}

Moreover, from \citep[Corollary 4.16]{LieroMielkeSavareLong}, it exists $(f^\star, g^\star) \in \Phi_c(\mu, \nu)$ solution of the unbalanced Kantorovich dual~\ref{eq:dual-unbalanced-kantorovich-problem} between $\mu$ and $\nu$ s.t.\ $f\oplus g = c$, $\pi_\mathrm{UOT}$-a.e. This implies that $(f^\star, g^\star)$ solves the balanced Kantorovich dual problem~\ref{eq:dual-kantorovich-problem} between $\tilde{\mu}$ and $\tilde{\nu}$. Indeed, since $\pi_\mathrm{UOT}$ has $\tilde{\mu}$ and $\tilde{\nu}$ as marginals, this equality yields
\begin{equation}
    \int_\Omega f^\star(\*x) \diff\tilde{\mu}(\*x) + \int_\Omega g^\star(\*y) \diff\tilde{\nu}(\*y) = \int_\Omega c(\*x,\*y) \diff\pi_\mathrm{UOT}(\*x, \*y)\,.
\end{equation}
On the other hand, for any $(f, g) \in \Phi_c(\tilde{\mu}, \tilde{\nu})$, one has
\begin{equation}
    \int_\Omega f(\*x) \diff\tilde{\mu}(\*x) + \int_\Omega g(\*y) \diff\tilde{\nu}(\*y) = \int_\Omega f \oplus g (\*x,\*y)\diff\pi_\mathrm{UOT}(\*x, \*y) \leq \int_{\Omega\times\Omega} c (\*x,\*y)\diff\pi_\mathrm{UOT}(\*x, \*y)\,
\end{equation}
which provides
\begin{equation}
    \sup_{(f, g) \in \Phi_c(\tilde{\mu}, \tilde{\nu})} \int_\Omega f(\*x) \diff\tilde{\mu}(\*x) + \int_\Omega g(\*y) \diff\tilde{\nu}(\*y)  \leq \int_{\Omega\times\Omega} c (\*x,\*y)\diff\pi_\mathrm{UOT}(\*x, \*y).
\end{equation}
Additionally, since $(f^\star, g^\star) \in \Phi_c(\mu, \nu)$, one has $(f^\star, g^\star) \in \Phi_c(\tilde{\mu}, \tilde{\nu})$ because we have shown that $\tilde{\mu} \ll \mu$ and $\tilde{\nu} \ll \nu$. Therefore, they are optimal dual potentials. Then, since $c$ is continuous and $\Omega$ is compact, strong duality holds~\citep[Theorem 1.46]{santambrogio2015optimal} and thus
\begin{equation}
\begin{split}
\sup_{(f, g) \in \Phi_c(\tilde{\mu}, \tilde{\nu})} \int_\Omega f(\*x) \diff\tilde{\mu}(\*x) + \int_\Omega g(\*y) \diff\tilde{\nu}(\*y)  
& = \inf_{\pi\in\Pi(\tilde{\mu},\tilde{\nu})} \int_{\Omega\times\Omega} c (\*x,\*y)\diff\pi(\*x, \*y) \\
& = \int_{\Omega\times\Omega} c (\*x,\*y)\diff\pi_\mathrm{UOT}(\*x, \*y),
\end{split}
\end{equation}
which yields the optimality of $\pi_\mathrm{UOT}$ in the balanced problem~\ref{eq:kantorovich-problem} between $\tilde{\mu}$ and $\tilde{\nu}$.

To conclude the proof of the first point, we show that $u = \frac{\diff\tilde{\mu}}{\diff \mu} = \bar{\phi}(-f)$ and $u = \frac{\diff\tilde{\nu}}{\diff \nu} = \bar{\phi}(-g)$. \citep[Corollary 4.16]{LieroMielkeSavareLong} also states that $f^\star = - \phi' \circ u$ $\mu$-a.e. and $g^\star = - \phi' \circ v$ $\nu$-a.e. Since $\phi$ is strictly convex, $\phi'$ is invertible and $(\phi')^{-1} = (\phi^*)'= \bar{\phi}$ \citep[Box 1.12]{santambrogio2015optimal}, so the result follows.

\textbf{Proof of point \ref{prop:rebalancing-monge-problem-2}.}
First, given that $\mu \ll \mathcal{L}_d$,  one has $\tilde{\mu} \ll \mathcal{L}_d$ since $\tilde{\mu} \ll \mu$. Then, since $c(\*x,\*y) = h(\*x - \*y)$ where $h$ is strictly convex, we can apply \citep[Theorem 1.17]{santambrogio2015optimal} to state that the Monge map between $\tilde{\mu}$ and $\tilde{\nu}$ exists and is unique, and $T^\star = \Id - \nabla h^* \circ f^\star$ since we have shown that $(f^\star, g^\star)$ are optimal balanced dual potential between $\tilde{\mu}$ and $\tilde{\nu}$. Then, since $c$ is continuous, $\Omega$ is compact and $\tilde{\mu}$ is atomless, the Kantorovich and the Monge formulation coincide \citep[Theorem 1.33]{santambrogio2015optimal} so $\pi_\mathrm{UOT} = (\Id,T^\star)\sharp\tilde{\mu}$ and it is unique.

\end{proof}

\section{Coupling computation}
\label{app:coupling}
In this section, we lay out details w.r.t. the mini-batch coupling computation we leverage in our framework. In \ref{app:entropic-regularization} we detail entropy regularized OT, which we utilize as it offers a more efficient way of estimating couplings as opposed to computing the non-regularized one, which we lay out in \ref{app:comp-complex}. Moreover, we discuss limitations that might arise given the coupling computation and in general with our proposed framework (\ref{app:limitations}).

\subsection{Entropic Regularization}
\label{app:entropic-regularization}
When the measures $\mu$ and $\nu$ are instantiated as samples, as usual in a machine learning context, the Kantorovich problem~\ref{eq:unbalanced-kantorovich-problem} translates to a convex program whose objective function can be smoothed out using entropic regularization~\citep{cuturi2013sinkhorn}. For empirical measures $ \emp*\mu = \frac{1}{n} \sum_{i=1}^n \delta_{\*x_i}$, $\emp*\nu = \frac{1}{n} \sum_{j=1}^n \delta_{\*y_j}$ and $\varepsilon > 0$, we form the cost matrix $\*C = \left[ c(\*x_i, \*y_j) \right]_{ij}$ and consider:
\begin{equation}
\label{eq:empirical-unbalanced-kantorovich-problem}
\min_{\*P \geq 0} \,\,\, \langle \*P, \*C \rangle + \lambda_1 \D_\phi(\*P\*1_n | \tfrac{1}{n} \*1_n) + \lambda_2 \D_\phi(\*P^\top\*1_n | \tfrac{1}{n} \*1_n) - \varepsilon H(\*P)\,,
\end{equation}
where $H(\*P) = - \sum_{i,j=1}^n \*P_{ij} \log(\*P_{ij})$. With $\varepsilon \to 0$, we recover \ref{eq:unbalanced-kantorovich-problem}. Using $\varepsilon > 0$, the above Eq.~(\ref{eq:empirical-unbalanced-kantorovich-problem}) admits a tractable dual representation that can be leveraged to derive a fast computational procedure which generalizes the Sinkhorn algorithm and is commonly used in computational OT~\citep{chizat2018scaling, peyre2019computational, séjourné2023unbalanced}. In practice, we follow~\citet{cuturi2022optimal} and define $\tau_i=\frac{\lambda_i}{\lambda_i+\varepsilon}$. This facilitates hyper-parameter selection: $\tau_i \in \left(0,1\right]$ and we recover the $i$-th hard marginal constraint with $\tau_i=1$, when $\lambda_i \to +\infty$. In this work, we use entropic regularization with a small enough regularization strength $\varepsilon$ to approximate UOT couplings. In all the experiments, we use $\varepsilon = 0.01 \cdot \bar{\*C}$, where $\bar{\*C}$ denotes the mean of the cost matrix $\*C$.

\subsection{Limitations}
\label{app:limitations}
We consider the choice of the additional hyperparameter $\tau = (\tau_1, \tau_2)$ as the main challenge of our proposed framework. A way to make the choice of these hyperparameters independent of the scale of the data, and hence to some degree comparable across different tasks, is to scale the cost matrix of the discrete OT problem by its mean, which we leverage in all experiments as mentioned in Appendix~\ref{app:entropic-regularization}.

A second limitation is the implicit removal of outliers by UOT. This property is largely seen as a strength of UOT, e.g. \citet{choi2023generative} show UOT's robustness to outliers. However, what is considered an outlier with respect to UOT is again dependent on the hyperparameter $\tau$. As one lowers $\tau$, the data pairs with lower distance attain more relative mass, and as $\tau\rightarrow 0$ sampling from the coupling approaches almost solely sampling the lowest distance pair. Depending on the application, there might be data points in the dataset that are "far" away from the remaining data points but are considered to be relevant to the model training. When choosing a lower $\tau$, these points might get removed by UOT. Hence, the choice of $\tau$ can be very important, but as discussed in \citet{séjourné2023unbalanced} the optimal choice is not evident and thereby usually facilitates at least a small grid search. Lastly, our framework also adds computational complexity, although negligible in most cases, as discussed in \ref{app:comp-complex}.

\subsection{Computational Complexity}
\label{app:comp-complex}
We compute all mini-batch UOT couplings using entropic regularization, which allows the use of a generalization of Sinkhorn's algorithm~\citep{chizat2018scaling}, having the same $\mathcal{O}(n^2)$ time complexity, where $n$ depicts the batch size. The added computational overhead of our framework compared to using the balanced alternative depends on the estimator used. We distinguish between two types of estimators in the following.

\xhdr{Estimators that don't leverage OT couplings (e.g. OT-MG or OT-ICNNs).} These estimators require samples of the marginal distributions between which the Monge map is to be calculated. Thus, each of the training iterations for these models involves first calculating an unbalanced coupling $\hat{\pi}_\mathrm{UOT}$ using samples $\*x_1, \dots, \*x_n \sim \mu$ and $\*y_1, \dots, \*y_n \sim \nu$, then evaluate their respective training losses $\mathcal{L}_{\textrm{OT-MG}}$ or $\mathcal{L}_{\textrm{OT-ICNN}}$ on samples $\tilde{\*x}_1, \dots, \tilde{\*x}_n \sim \tilde{\mu}_n = p_1 \sharp\hat{\pi}_\mathrm{UOT}$ and $\tilde{\*y}_1, \dots, \tilde{\*y}_n \sim \tilde{\nu}_n = p_1 \sharp\hat{\pi}_\mathrm{UOT}$. Thus, for both these estimators, the additional computational cost lies in calculating $\hat{\pi}_\mathrm{UOT}$.

\xhdr{Estimators that already leverage OT couplings (like OT-FM).} Each iteration of OT-FM consists of sampling and calculating a coupling $\hat{\pi}_\mathrm{OT}$ from samples $\*x_1, \dots, \*x_n \sim \mu$ and $\*y_1, \dots, \*y_n \sim \nu$, then calculating the Flow Matching loss on samples of $\hat{\pi}_\mathrm{OT}$. For UOT-FM, we directly replace the calculation of a balanced plan $\hat{\pi}_\mathrm{OT}$ by an unbalanced plan $\hat{\pi}_\mathrm{OT}$ from samples $\*x_1, \dots, \*x_n \sim \mu$ and $\*y_1, \dots, \*y_n \sim \nu$, then calculate the Flow Matching loss on $\hat{\pi}_\mathrm{OT}$ samples in the same way. Since $\hat{\pi}_\mathrm{OT}$ and $\hat{\pi}_\mathrm{UOT}$ can be calculated with the same $\mathcal{O}(n^2)$ runtime complexity, using Sinkhorn's algorithm~\citep{cuturi2013sinkhorn} and its unbalanced generalization~\citep{chizat2018scaling}, UOT-FM has exactly the same runtime complexity as OT-FM. Therefore, since FM operates at a coupling level, incorporating unbalancedness via UOT-FM does not add any additional computational cost.

In the first case, the relative computational overhead depends on the cost of a gradient step. In \citet{pooladian2023multisample,tong2023improving} it was shown that even coupling computations of cost $\mathcal{O}(n^3)$ are negligible compared to the gradient step of the Flow Matching loss. In this work, which involves computations scaling quadratically $\mathcal{O}(n^2)$ rather than cubically, this observation applies even more strongly. Hence, in most settings, our framework will not significantly impact training time.

\section{Additional Empirical Results}
In this section, we report additional results: In \ref{app:celeba-pixel-space} results in pixel space for 64x64 CelebA translating $Male \rightarrow Female$. Then for 256x256 CelebA translating $Male \rightarrow Female$ we evaluate the performance of UOT-FM compared to OT-FM with few function evaluations and a low-cost Euler solver (\ref{app:celeba-nfe}) and in \ref{app:celeba-tau} we report results for different levels of unbalancedness $\tau$ in UOT-FM. Moreover, we compare the performance of OT-ICNN and UOT-ICNN to established methods in single-cell trajectory inference (\ref{app:scti-competing}). Furthermore, we benchmark UOT-ICNN against competing unbalancedness methods proposed in \citet{lubeck2022neural} and \citet{yang2019scalable} on sciPlex perturbation dataset~\citep{srivatsan2020massively} in \ref{app:scp-competing}. Lastly, we plot FID and transport cost over training (\ref{app:cifar-conv}) for the generative modeling experiment on CIFAR-10 as well as some randomly generated samples in \ref{app:cifar-generated}.

\subsection{CelebA in Pixel-space}
\label{app:celeba-pixel-space}
\begin{table}[H]
\caption{CelebA 64x64 \textit{Male $\rightarrow$ Female}. Results denoted with $\ast$ are taken from \citet{korotin2022neural}.}
\label{tab:celeba64}
\begin{center}
\begin{tabular}{lcccccc}
\toprule
&  &  & \multicolumn{4}{c}{\it attribute-wise FID}\\
\cmidrule(r){4-7}
\bf Method  & \bf FID & \bf T-Cost &Glasses  &Hat  &Gray hair & \bf Average
\\ \midrule
NOT             &\ \ 13.23\mbox{$^\ast$}                  &-                  &-                  &-                  &-              &-\\
CycleGAN        &\ \ 17.74\mbox{$^\ast$}                  &-                  &-                  &-                  &-              &-\\
OT-FM           &\underline{11.52}      &40.37       &47.40              &85.80              &49.53                &60.91\\
UOT-FM          &\textbf{11.09}         &\textbf{37.38}      &\textbf{47.13}     &\textbf{84.29}     &\textbf{46.73}              &\textbf{59.38}\\
\bottomrule
\end{tabular}
\end{center}
\end{table}

\subsection{CelebA with a Low-cost Solver}
\label{app:celeba-nfe}
\begin{table}[H]
\caption{CelebA 256x256 \textit{Male $\rightarrow$ Female} with a low-cost Euler solver and a varying number of function evaluations.}
\label{appfig:celeba-nfe}
\begin{center}
\begin{tabular}{ccccc}
    \toprule
     & \multicolumn{2}{c}{\bf FID}   &  \multicolumn{2}{c}{\bf T-Cost} \\
     \cmidrule(r){2-3}
     \cmidrule(r){4-5}
    \bf NFE & OT-FM & UOT-FM& OT-FM & UOT-FM \\
    \midrule
    Adaptive       &14.58      &\textbf{13.94}     &59.31      &\textbf{56.78}\\
    40      &14.53      &\textbf{14.02}       &57.00      &\textbf{54.84}\\
    20       &14.91      &\textbf{14.66}        &54.44      &\textbf{53.40}\\
    12       &16.40     &\textbf{16.08}        &53.69      &\textbf{51.82}\\
    8      &19.56      &\textbf{19.32}     &52.46      &\textbf{50.49}\\
    6          &24.14      &\textbf{24.08}        &51.77      &\textbf{46.66}\\
    4         &38.02      &\textbf{37.38}         &51.25      &\textbf{49.02}\\
    2       &88.47      &\textbf{87.85}     &55.08      &\textbf{53.22}\\
    \bottomrule
\end{tabular}
\end{center}
\end{table}

\subsection{CelebA over Different Levels of Unbalancedness}
\label{app:celeba-tau}
\begin{table}[H]
\caption{256x256 CelebA \textit{male $\rightarrow$ female} results for UOT-FM with different $\tau=\tau_1=\tau_2$.}
\label{sample-table}
\begin{center}
\begin{tabular}{lccc}
& \bf  FID & \bf T-Cost &\bf FID Average\\
\midrule
OT-FM          &14.58          &59.31 &79.7\\
$\tau=0.99$          &14.75	&58.94 &80.3\\
$\tau=0.98$          &\underline{14.23}	&57.63 &79.8\\
$\tau=0.95$          &\textbf{13.94}   &\underline{56.78} & \textbf{77.9}\\
$\tau=0.90$          &14.30	&\textbf{55.24} &\underline{78.8}\\
\end{tabular}
\end{center}
\end{table}

\subsection{Effect of Batch-size and Epsilon on EMNIST}
Here, we empirically evaluate the effect of changing the batch size and $\varepsilon$ in UOT-FM applied to the EMNIST dataset translating $digits \rightarrow letters$. We report the learned Transport Cost and attribute-wise average FID score as detailed in Appendix~\ref{app:image-metrics}. The results reported in Section~\ref{subsec:image-translation} are obtained with a batch size of $256$ and $\varepsilon=0.01$. Figure~\ref{subfig:bs-effect-emnist} shows the effect of changing the batch size. Our results align with previous studies~\citep{pooladian2023multisample, tong2023improving}, which show that already a fairly low batch size yields competitive results with OT mini-batch couplings, and increasing it over $128$ does not change results significantly. Only when we decrease the batch size below $32$, do we observe a significant drop in performance. For all batch size results, we keep the number of samples seen during training the same set to $256 \cdot 500k$.

Secondly, Figure~\ref{subfig:eps-effect-emnist} confirms empirically how the performance of UOT-FM approaches FM as we increase $\varepsilon$. Additionally, we find that lowering $\varepsilon$ further than $0.01$ does not improve the learned transport cost and average FID significantly while increasing the runtime of the coupling computation.

\begin{figure}[H]
\centering
\begin{subfigure}{0.55\textwidth}
    \centering
    \includegraphics[width=0.9\linewidth]{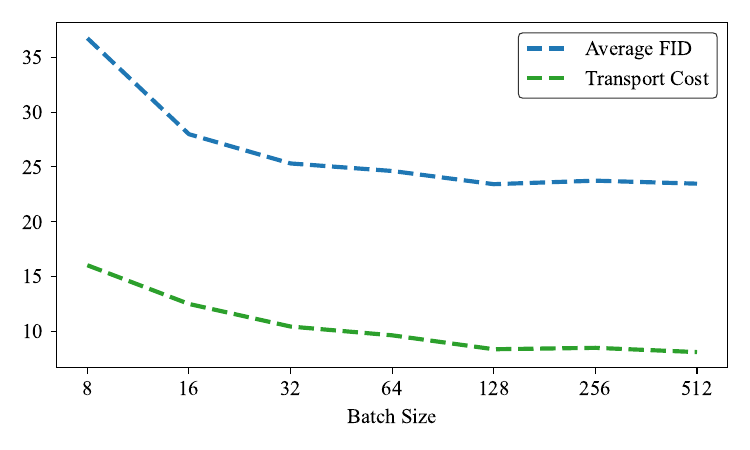}
    \caption{Effect of different batch sizes}
   \label{subfig:bs-effect-emnist}
\end{subfigure}
\begin{subfigure}{.4\textwidth}
  \centering
  \includegraphics[width=1\linewidth]{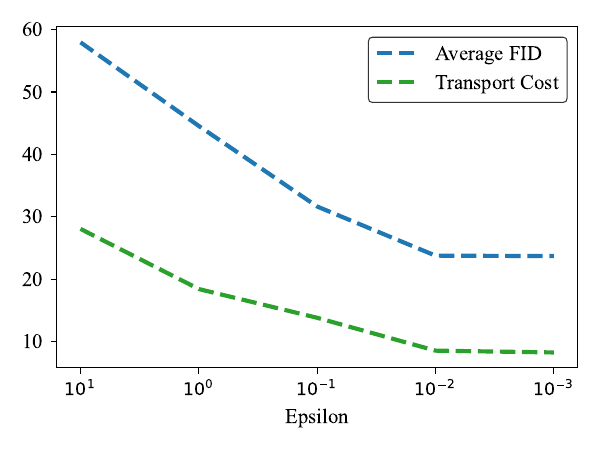}
  \caption{Effect of different $\varepsilon$}
  \label{subfig:eps-effect-emnist}
\end{subfigure}
\caption{Results on EMNIST with UOT-FM translating $digits \rightarrow letters$.}
\label{fig:bs-eps-effect-emnist}
\end{figure}

\subsection{Single-cell Trajectory Inference Comparison to Established Methods}\label{app:scti-competing}
We compare OT-ICNN and UOT-ICNN against established methods in single-cell trajectory inference. We benchmark against scVelo~\citep{bergen2019generalizing}, TrajectoryNet~\citep{tong2020trajectorynet}, and Waddington OT (WOT)~\citep{schiebinger2019}. Table~\ref{apptable:transition_probs} summarizes results, while the full transition probabilities are reported in Table~\ref{apptable:transitionprobs}. UOT-ICNN outperforms all competing methods on the EB and NEB branches while performing second best on the Ngn3 EP cells. Additionally, UOT-ICNN improves or keeps equal performance upon OT-ICNN in ten out of thirteen cell transitions. Implementation details of each competing method are described in Appendix~\ref{app:scti-competing-details}.

\begin{table}[H]
        \caption{Evaluation of different trajectory inference methods based on correct cell type transitions.}
	\label{apptable:transition_probs}
	\centering
	\begin{tabular}{lccc}
            \toprule
            & \multicolumn{3}{c}{Correct transitions}                   \\
            \cmidrule(lr){2-4}
            Model  & EB & Ngn3 EP & NEB\\
		\midrule
            TrajectoryNet  & $0.33$ & $0.01$  & $0.71$\\[0.3em]
		scVelo & $0.44$ & {\textbf{0.99}}  & $0.39$\\[0.3em]
		WOT & $0.45$ & $0.50$  & {\underline{0.72}}\\[0.3em]
		  OT-ICNN & {\underline{0.53}} & $0.07$  & $0.67$\\[0.3em]
		UOT-ICNN & {\textbf{0.57}} & \underline{0.69} & {\textbf{0.85}}\\
		\bottomrule
    \end{tabular}
\end{table}

\begin{table}[H]
    \caption{Cell type transition probabilities between cell types $A$ and $B$ such that cell type $A$ maps exclusively to cell type $B$. For each column, we underline the \myul{BlueViolet}{best}, \myul{SeaGreen}{second best}, and \myul{SkyBlue}{third best} methods.\\}
    \label{apptable:transitionprobs}
	\centering
	\begin{tabular}{l p{0.4cm}p{0.4cm}p{0.4cm}p{0.4cm}p{0.4cm}p{0.4cm}p{0.4cm}p{0.4cm}p{0.4cm}p{0.4cm}p{0.4cm}p{0.4cm}p{0.4cm}} \toprule Model     & \textbf{FA \ $\rightarrow$ A} & \textbf{A  \ \ \ \  $\rightarrow$ A} & \textbf{FB $\rightarrow$ B} & \textbf{B  \ \ \ \ $\rightarrow$ B} & \textbf{FD $\rightarrow$ D} & \textbf{D \ \ \ \ 
 $\rightarrow$ D} & \textbf{FE $\rightarrow$ E}  & \textbf{E \ \ \ \ $\rightarrow$ E}  & \textbf{NE  \ \ $\rightarrow$ ED} & \textbf{NL \  \ $\rightarrow$ ED}& \textbf{DU \  \ $\rightarrow$ DU}& \textbf{T \  \ $\rightarrow$ AC}& \textbf{AC \  \ $\rightarrow$ AC}\\
		\midrule
		TrajectoryNet    & $0.07$  & $0.46$ & $0.07$  & $0.39$ & $0.11$  & \myul{SkyBlue}{\textbf{0.75}} & $0.10$ & $0.52$ & $0.00$  & $0.01$ & \myul{SkyBlue}{\textbf{0.32}} & \myul{SkyBlue}{\textbf{0.79}} & \myul{SkyBlue}{\textbf{0.99}}\\[0.5em]
            scVelo & \myul{BlueViolet}{\textbf{0.79}} & \myul{BlueViolet}{\textbf{0.80}} & \myul{BlueViolet}{\textbf{0.30}} & \myul{SkyBlue}{\textbf{0.61}} & $0.03$  & $0.52$ & $0.04$  & \myul{SkyBlue}{\textbf{0.56}} & \myul{BlueViolet}{\textbf{0.98}} & \myul{BlueViolet}{\textbf{1.00}} & $0.32$ & $0.04$ & $0.90$\\[0.5em]
		WOT    & \myul{SkyBlue}{\textbf{0.37}}  & \myul{SeaGreen}{\textbf{0.62}} & \myul{SkyBlue}{\textbf{0.18}}  & $0.44$ & \myul{SkyBlue}{\textbf{0.19}}  & $0.74$ & \myul{BlueViolet}{\textbf{0.55}} & $0.49$ & \myul{SkyBlue}{\textbf{0.50}}  & \myul{SkyBlue}{\textbf{0.50}} & \myul{BlueViolet}{\textbf{0.82}} & $0.48$ & $0.84$\\[0.5em]
		OT-ICNN & \myul{SeaGreen}{\textbf{0.51}}  & $0.54$ & $0.11$  & \myul{SeaGreen}{\textbf{0.70}} & \myul{SeaGreen}{\textbf{0.39}}  & \myul{SeaGreen}{\textbf{0.94}} & \myul{SeaGreen}{\textbf{0.48}}  & \myul{BlueViolet}{\textbf{0.62}} & $0.07$  & $0.08$ & $0.01$ & \myul{BlueViolet}{\textbf{1.00}} & \myul{BlueViolet}{\textbf{1.00}}\\ [0.5em]
            UOT-ICNN    & $0.25$  & \myul{SkyBlue}{\textbf{0.60}} & \myul{SeaGreen}{\textbf{0.29}}  & \myul{BlueViolet}{\textbf{0.75}} & \myul{BlueViolet}{\textbf{0.78}}  & \myul{BlueViolet}{\textbf{0.99}} & \myul{SkyBlue}{\textbf{0.36}} & \myul{SeaGreen}{\textbf{0.55}} & \myul{SeaGreen}{\textbf{0.65}}  & \myul{SeaGreen}{\textbf{0.73}} & \myul{SeaGreen}{\textbf{0.59}} & \myul{BlueViolet}{\textbf{1.00}} & \myul{BlueViolet}{\textbf{1.00}}\\ 
		\bottomrule \\ [0.5em]
	\end{tabular}
\end{table}

\subsection{Benchmark against Competing Unbalancedness Estimators}
\label{app:scp-competing}
To assess the performance of our proposed approach UOT-ICNN, we benchmark it against competing methods proposed in \citet{lubeck2022neural} and \citet{yang2019scalable}. While our proposed approach for unbalanced neural Monge maps is significantly different from \citet{lubeck2022neural}, both approaches build upon the solver suggested in \citet{makkuva2020optimal}. We use the exact same architecture and training procedure for both approaches and only perform the resampling in a different way. In detail, UOT-ICNN solves one discrete unbalanced OT problem between a batch of the source distribution and a batch of the target distribution. In contrast, \citet{lubeck2022neural} compute an unbalanced discrete OT solution between the push-forward of the source distribution and the target distribution to estimate the left rescaling factor. Moreover, they compute a second discrete OT solution between the pull-back of the target batch and the source batch to obtain an estimate of the right rescaling factor. We also compare to \citet{yang2019scalable} but would like to highlight that we can only approximate a similar setup, e.g. by also training for $1000$ iterations. 

We evaluate the models on the sciPlex pertubation data~\citep{srivatsan2020massively}. We chose drugs that were reported to have a strong signal. The data was downloaded from \url{https://github.com/bunnech/cellot/tree/main}. We compute a 30-dimensional PCA embedding and evaluate the performance with the Sinkhorn divergence~\citep{feydy2018interpolating} (with entropy regularization parameter $\varepsilon=0.01$) \citep{feydy2018interpolating}, analogously to the experimental setup for comparing OT-MG with UOT-MG in Section~\ref{subsec:perturbation}. Then we report the mean Sinkhorn divergence across all drugs for different levels of unbalancedness $\tau$ in Table~\ref{tab:sink_div_sciplex}. UOT-ICNN outperforms competing methods across all $\tau$. In particular, \citet{lubeck2022neural} achieves close performance to UOT-ICNN with $\tau=0.95$ but seems to diverge for lower values of $\tau$.

\begin{table}[H]
\caption{Sinkhorn divergence between predicted target and target distribution on sciPlex data per model. We highlight \textbf{best} and \underline{second best} performance.}
\label{tab:sink_div_sciplex}
\begin{center}
\begin{tabular}{lcccccc}
\toprule
&  \multicolumn{6}{c}{\it Sinkhorn Divergence}\\
\cmidrule(r){2-7}
\bf Method  & $\tau=0.95$  & $\tau=0.9$  & $\tau=0.85$  & $\tau=0.8$  & $\tau=0.75$  & $\tau=0.7$
\\ \midrule
UOT-ICNN             & \textbf{25.95} & \textbf{28.05} & \textbf{29.60}& \textbf{30.68}& \textbf{31.42}& \textbf{32.15}\\
\citet{lubeck2022neural}        & \underline{26.39} & 1286.48 & 1841.31 & 3758.46 & 12913.06 & 27031.94\\
\citet{yang2019scalable}           &37.40 &\underline{37.79} &\underline{37.84} &\underline{38.03} &\underline{38.09} &\underline{37.83}\\
\bottomrule
\end{tabular}
\end{center}
\end{table}

\subsection{CIFAR-10 Convergence over Training}
\label{app:cifar-conv}

Here, we plot FID score and transport cost over training. As expected introducing more unbalancedness by lowering $\tau$ reduces the learned transport cost. Generative performance increases with the introduction of unbalancedness offering faster convergence up until a certain level.

\begin{figure}[H]
\centering
\begin{subfigure}{0.49\textwidth}
    \centering
    \includegraphics[width=1\linewidth]{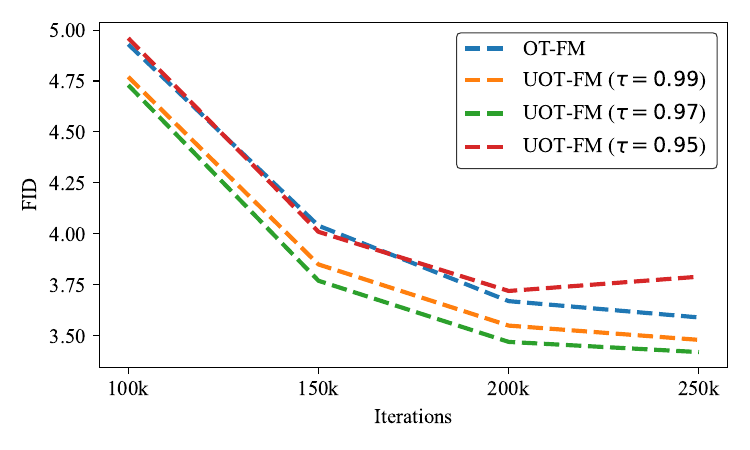}
    \caption{FID score over training iterations.}
   \label{subfig:cifar10-fid}
\end{subfigure}
\begin{subfigure}{.49\textwidth}
  \centering
  \includegraphics[width=1\linewidth]{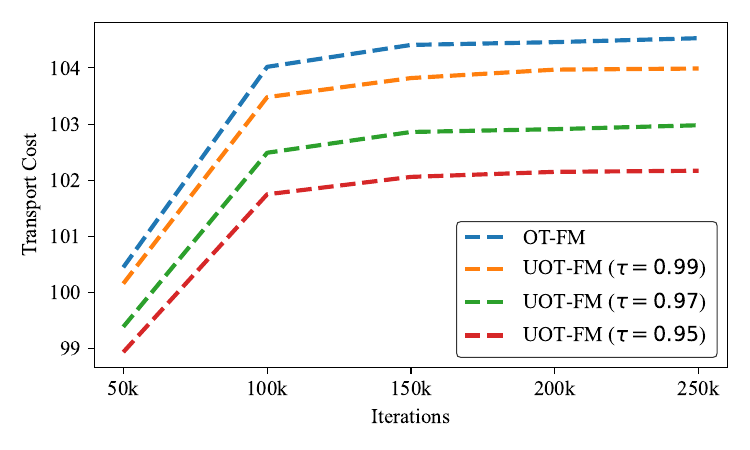}
  \caption{Learned transport cost over training iterations.}
  \label{subfig:cifar10-cost}
\end{subfigure}
\caption{Convergence of training on CIFAR-10 plotted for OT-FM and UOT-FM with different $\tau$.}
\label{fig:cifar10-convergence}
\end{figure}

\subsection{CIFAR-10 Generated Samples}
\label{app:cifar-generated}

\begin{figure}[H]
\centering
\includegraphics[width=0.5\linewidth]{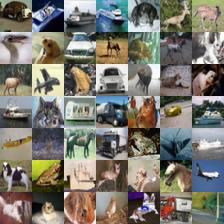}
\caption{Randomly generated images with UOT-FM ($\tau=0.97$) trained on CIFAR-10.}
\label{fig:bs-eps-effect-emnist}
\end{figure}

\section{Algorithms}
\label{app:algorithms}
In this section, we describe the full algorithms for \textbf{[i]} estimating unbalanced Monge maps with \emph{any} Monge map estimator in Algorithm~\ref{alg:unbalanced-mong-maps}, \textbf{[ii]} computing unbalanced optimal transport Flow Matching in Algorithm~\ref{alg:uot-fm}.

\begin{algorithm}[H]
\caption{Neural Unbalanced Monge maps}
\label{alg:unbalanced-mong-maps}
\begin{algorithmic}[1]
\REQUIRE 
Source and target distribution $\mu$, $\nu$, unbalancedness parameters $\tau =(\tau_1, \tau_2)$, loss function of a balanced Monge map estimator $\mathcal{L}_\mathrm{OT}$, solver $\mathrm{Solver}_\tau$,  batch size $n$, 
number of iterations $n_\mathrm{iter}$, 
boolean flag $\text{learn\_rescaling}$ whether to learn the re-weightings functions, neural re-weighting functions $u_{\theta}, v_\theta$

\FOR{$l = 1,\dots,n_\mathrm{iter}$}
    \STATE{Sample batches $\*x_1\dots\*x_n \sim \mu$ and $\*y_1\dots\*y_n \sim \nu$}
    \STATE Compute coupling $\hat{\pi}_\mathrm{UOT} \gets \mathrm{Solver}_{\tau}(\{\*x_i\}_{i=1}^n, \{\*y_j\}_{i=1}^n)$
    \STATE Sample new batches ($\{\tilde{\*x}_i\}_{i=1}^n, \{\tilde{\*y}_j\}_{i=1}^n) \sim \hat{\pi}_\mathrm{UOT}$
    \STATE Compute loss 
    $ \mathcal{L}_\mathrm{UOT} (\theta) \gets \mathcal{L}_\mathrm{OT}(\theta; 
     \{\tilde{\*x}_i\}_{i=1}^n,\{\tilde{\*y}_j\}_{i=1}^n)$
    \IF{\text{learn\_rescaling}}
        \STATE $\*a \gets \hat{\pi}_\mathrm{UOT} \*1_n$ and  $ \*b \gets \hat{\pi}_\mathrm{UOT}^\top \*1_n $
        \STATE $\mathcal{L}_\mathrm{UOT}(\theta) \gets \mathcal{L}_\mathrm{UOT}(\theta) + \tfrac{1}{n}\sum_{i=1}^n (u_\theta(\*x_i) - n\cdot a_i)^2 + \tfrac{1}{n}\sum_{j=1}^n (v_\theta(\*y_j) - n \cdot b_j)^2$
    \ENDIF
    \STATE Update $\theta$ with $\nabla_\theta\mathcal{L}_\mathrm{UOT}(\theta)$
\ENDFOR
\RETURN $\theta$
\end{algorithmic}
\end{algorithm}

\begin{algorithm}[H]
\caption{Unbalanced Optimal Transport Flow Matching (UOT-FM)}
\label{alg:uot-fm}
\begin{algorithmic}[1]
\REQUIRE Source and target distributions $\mu, \nu$, unbalancedness parameters 
 $\tau_1, \tau_2$, parameterized vector field $v_\theta$.

\WHILE{Training \do}
    \STATE Sample batches $\vx_0 \sim \mu$, $\vx_1 \sim \nu$, $t \sim \mathcal{U}(0, 1)$
    \STATE Compute coupling $\hat\pi_{\mathrm{UOT}} = \mathrm{UOT}(\vx_0, \vx_1, \tau_1, \tau_2)$
    \STATE Sample new batches $(\tilde{\vx}_0, \tilde{\vx}_1) \sim \hat\pi_{\mathrm{UOT}}$
    \STATE Compute flow $\tilde{\vx}_t = (1-t)\tilde{\vx}_0+t\tilde{\vx}_1$
    \STATE Compute loss $\mathcal{L}_{\mathrm{FM}}(\theta) = ||v_\theta(t, \tilde{\vx}_t) - u_t(\tilde{\vx}_t|\tilde{\vx}_1)||^2$
    \STATE Update $\theta$ with $\nabla_\theta\mathcal{L}_{\mathrm{FM}}(\theta)$
\ENDWHILE
\RETURN $v_\theta$
\end{algorithmic}
\end{algorithm}

\section{Related Work for Translation Tasks}
\label{app:related_work}
Here, we describe related work specific to the three different domain translation tasks.

\subsection{Related Work in Single-cell Trajectory Inference}
Optimal transport has been established as a trajectory inference method in single-cell genomics, spearheaded by \citet{schiebinger2019}. The high computational burden of discrete OT has led to the development of low-rank solvers, which have been studied in the context of single-cell trajectory inference methods in \citet{klein2023mapping}. Proofs of concept have also been conducted by \citet{tong2020trajectorynet,tong2023improving,tong2023simulation}. For developmental single-cell data, OT-based methods have been established as competitive methods in the field \citep{lange2022cellrank}.

\subsection{Related Work in Perturbational Predictions}
Studying the effect of different perturbations on a single-cell level is a recent field in single-cell biology enabled by the progress of machine learning \citep{ji2021machine}. First models were based on VAEs \citep{ji2021machine}, while recently, the community has focused on OT-based methods \citep{bunne2021learning,lubeck2022neural,uscidda2023monge}.

\subsection{Related Work in Unpaired Image Translation}
\label{app:related_work_imgtranslation}
One of the seminal works in unpaired image translation was CycleGAN~\citep{CycleGAN2017}, which introduced a cycle-consistency loss to encourage translations that can reconstruct the original image when run through a forward-backward cycle between domains. Several state-of-the-art methods have since built upon CycleGAN's foundational approach, seeking to improve the realism and variability of generated images. Notable examples include STARGAN~\citep{choi2018stargan}, which extended CycleGAN to handle multiple attributes simultaneously, as well as CUT~\citep{park2020contrastive}, and most recently UVCGAN~\citep{torbunov2022uvcgan}, which employs self-supervised pre-training and an optimized architecture, a UNet Vision Transformer~\citep{dosovitskiy2021image}. Note, that UOT-FM trains one network with one loss function and one hyperparameter $\tau$ while in contrast, CycleGAN-based approaches train at least four networks with four loss functions and hyperparameters.

Diffusion models (DMs)~\citep{song2020score} have achieved ground-breaking results in image generation. However, DMs were designed with only a Gaussian noise source distribution in mind and thus are not directly applicable to image translation. This sparked the development of the extension of DMs to the unpaired image translation task. UNIT-DDPM~\citep{sasaki2021unitddpm} extends the CycleGAN concept to DMs by leveraging two DMs between domains with a cycle-consistency loss. ILVR~\citep{choi2021ilvr} and SDEdit~\citep{meng2022sdedit} rely on a test source image during inference without leveraging information from the training source distribution. EDGSE~\citep{zhao2022egsde} suggests alleviating this constraint by pre-training two energy functions on the respective source and target domain. Very recently, a different approach was proposed based on the framework of Diffusion Schrödinger Bridges~\citep{debortoli2023diffusion} as applied in \citet{liu2023i2sb}.

In contrast to DMs, OT-FM offers a more natural framework for unpaired image translation out of the box. It has been applied to CelebA encoded in a 128-dimensional VAE embedding~\citep{tong2023improving}, but without comparison to any established methods and without reporting common metrics in image translation like FID. However, they show that OT-FM improves performance upon FM with the independent coupling, which we confirm with our results also holds in higher dimensional embeddings (Section~\ref{subsec:image-translation}) and pixel-space (Appendix~\ref{app:celeba-pixel-space}). Rectified flow~\citep{liu2022flow} can be interpreted as a version of FM that approximates OT-FM. It has been applied to high-quality image translation and reports results visually, but again without comparison to established methods and no numerical results.

\section{Experimental and Evaluation Details}
\label{app:implementation_details}
Our code is based on Jax~\citep{jax2018github} while utilizing parts of the DeepMind Jax ecosystem~\citep{deepmind2020jax}.

\subsection{Unbalanced Coupling Algorithm}
To compute the entropic unbalanced coupling we leverage the \texttt{ott-jax} library~\citep{cuturi2022optimal} with the Sinkhorn algorithm~\citep{cuturi2013sinkhorn}. We choose the entropic coupling with a small $\epsilon$ due to its computational benefits.

\begin{table}[h]
    \centering
    \caption{Hyperparameters for unpaired image translation on CelebA.}
    \begin{tabular}{lccc}
    \toprule
        & CelebA-256 gender & CelebA-256 glasses & CelebA-64\\
     \midrule
     Channels &  128 &  128 &  192\\
     ResNet blocks&   4 & 4 & 3\\
     Channels multiple& 2, 2, 2 & 2, 2, 2 & 1, 2, 3, 4\\
     Heads &1 & 1 & 4\\
     Heads channels& 64 & 64 & 64\\
     Attention resolution& 16 & 16 & 32, 16, 8\\
     Dropout &0.1 & 0.1 & 0.1\\
     GPU batch size &64 & 64 & 32\\
     Effective Batch size& 256 & 256 & 256\\
     Iterations& 400k & 100k & 400k\\
     Learning rate & 1e-4 & 1e-4 & 1e-4\\
     Scheduler & constant & constant & constant\\
     EMA-decay & 0.9999 & 0.9999 & 0.9999\\
    \bottomrule
    \end{tabular}
    \label{tab:architecture-hyperparameters}
\end{table}
\subsection{Unpaired Image translation}
For each of the CelebA image translation tasks, we use a very similar hyperparameter setup based upon the UNet architecture used in \citet{dhariwal2021diffusion}, where all FM models were trained with the exact same setup. We report these in Table~\ref{tab:architecture-hyperparameters}. For EMNIST we leverage the MLPMixer architecture~\citep{tolstikhin2021mlpmixer} with hyperparameters detailed in Table~\ref{apptab:mlpmixer}. Additionally, we use the Adam optimizer~\citep{kingma2014adam} with $\beta_1=0.9,\beta_2=0.999, \epsilon=1e-8$ and no weight decay for all image translation experiments.

\begin{table}[]
    \centering
    \caption{Hyperparameters for the MLPMixer used for the EMNIST experiments.}
    \begin{tabular}{lc}
        \toprule
        Channels & 64\\
        Patch size &  4\\
        Token Mixer channels &  512\\
        Channel Mixer channels & 512\\
        Depth & 4\\
        Batch size & 256\\
        Iterations & 500k\\
        Learning rate & 3e-4\\
        Scheduler & constant\\
        EMA-decay & 0.9999\\
    \bottomrule
    \end{tabular}
    \label{apptab:mlpmixer}
\end{table}

During inference, we solve for $p_t$ at $t=1$ using the adaptive step-size solver \texttt{Tsit5} with \texttt{atol=rtol=1e-5} implemented in the \texttt{diffrax} library~\citep{kidger2021on}. For $\tau$ we employ a small grid search on the $Male \rightarrow Female$ task, where results are reported in Appendix~\ref{app:celeba-tau}. $\tau=0.95$ achieves the best performance and subsequently, we choose $\tau=0.95$ for all other tasks, including in pixel-space. For the EMNIST experiment, we choose $\tau_1=0.9, \tau_2=1.0$.

\subsubsection{Datasets}
\label{app:img-datasets}
For all image translation experiments, we rescale images from $[0, 255]$ to $[-1.0, 1.0]$. In practice, it has been shown that data augmentation and regularization, e.g. random rotations or added noise, can be beneficial to achieve better generalization. We note that in our case, this entails learning a Monge map between adjusted distributions $\Bar{\mu}$ and $\Bar{\nu}$ instead of $\mu$ and $\nu$. In pratice, we regularize source and target distribution with a small amount $\sigma$ of Gaussian noise such that we learn the Monge map between $\Bar{\mu} = \mu * \mathcal{N}(0, \sigma^2 \mI)$ and $\Bar{\nu} = \nu * \mathcal{N}(0, \sigma^2 \mI)$, where $*$ denotes the convolution operator. In the FM framework, this is equivalent to considering Gaussian probability paths $p_t(\*x|\*x_t) = \mathcal{N}(\*x|\*x_t,\sigma^2 \mI)$, which is implemented by adding a small amount of noise to the computation of $\vx_t$ such that $\vx_t = (1-t)\vx_0+t\vx_1 + \sigma\epsilon$, where $\epsilon\sim\mathcal{N}(0, \mI)$.  

\xhdr{EMNIST.}
We leverage a subset of EMNIST, where we take $digits \ \{0, 1, 8\}$  as source and $letters \ \{O, I, B\}$ as target distribution. The grayscale images are of shape $28$x$28$. The resulting translation task contains a large distribution shift as visualized in Figure~\ref{fig:emnist-distribution}.

\xhdr{CelebA.}
CelebA contains images of size $218$x$178$. Following \citet{torbunov2022uvcgan, nizan2020breaking, zhao2021unpaired} we do not use the validation dataset for training, but instead add it to the test dataset. Then, the gender swap task contains about 68k males and 95k females while the glasses task entails 11k samples with and 152k without glasses. For CelebA-256 we again follow a similar setup to \citet{torbunov2022uvcgan} and upsize images to $313$x$256$. Then instead of random cropping we center crop all images to $256$x$256$. This is done to pre-compute all embeddings for the Stable Diffusion VAE. Note, that this is a disadvantage for the benchmarked FM methods as they cannot utilize the benefit of random cropping during training like \citet{torbunov2022uvcgan}. For Celeba-64 we resize all images to $64$x$64$.

\begin{figure}[]
\centering
\includegraphics[width=0.5\linewidth]{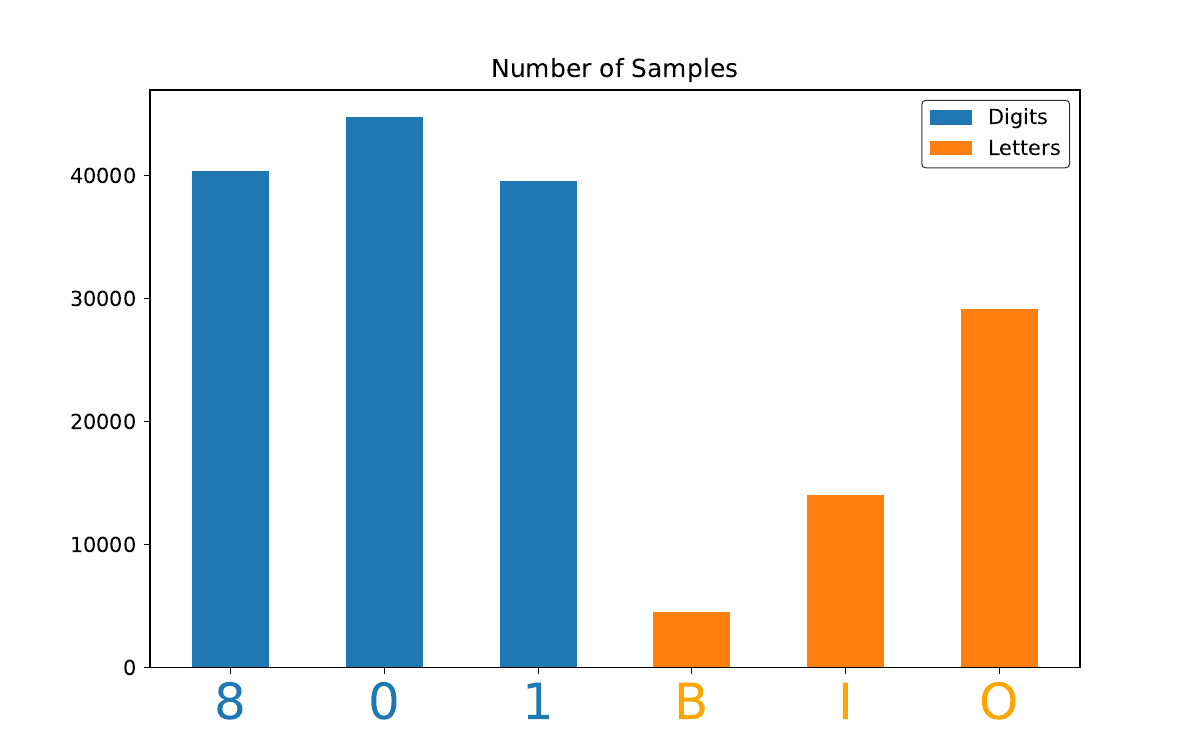}
\caption{EMNIST data distribution.}
\label{fig:emnist-distribution}
\end{figure}

\subsubsection{Metric computation details}
\label{app:image-metrics}
For general FID computation, we follow \citet{torbunov2022uvcgan, korotin2022neural} and compare translated test samples from the source distribution against the statistic from the test target distribution. For the attribute-wise computation, the number of test samples can be very low depending on the task and attribute. Thus, we compute the attribute-wise FID comparing translated test samples to the statistics of the whole dataset w.r.t.\ the given attribute. For all FID computations, we use the \texttt{jax-fid} library. The transport cost $||\psi_1(\*x_0) - \*x_0||_2$ is always computed and averaged over the hold-out test set. Note that we compute transport cost in pixel space, scaled back to $[0, 255]$. The transport cost between the rescaled measures $\mathrm{W}_c(\Tilde{\mu},\Tilde{\nu})$ is always less than or equal to the non-rescaled one $\mathrm{W}_c(\mu,\nu)$. With this metric, we aim to measure whether this also generalizes to unseen samples from the balanced source distribution $\vx_0\sim\mu$ when training UOT-FM.

\subsection{Image Generation}
For the CIFAR-10 experiments, we follow almost the exact setup reported in \citet{tong2023improving} except that we compute mini-batch couplings with entropy regularized OT as detailed in Appendix~\ref{app:entropic-regularization}. We report results at iteration $250k$ reproducing the OT-FM results from \citet{tong2023improving}.

\subsection{Single-cell perturbation responses}
For each drug, we model the neural map $T_\theta$ using an MLP with hidden layer size $[\max(128, 2d), \max(128, 2d), \max(64, d), \max(64, d)]$ where $d$ is the dimension of the data. We train it for $n_\mathrm{iter} = 10,000$ iterations with a batch size $n = 1,024$ and the Adam optimizer~\citep{kingma2014adam} using a learning rate $\eta = 0.001$, along with a polynomial schedule of power $p=1.5$ to decrease it to $10^{-5}$. Finally, we select the unbalancedness parameters using a grid search, imposing $\tau_1=\tau_2$ and selecting among three values $\tau_i \in \{0.99, 0.95, 0.9\}$.

\subsection{Single-cell trajectory inference}
\label{app:scti-hypparams}

\begin{figure}[t]
  \centering
  \includegraphics[scale=0.5]{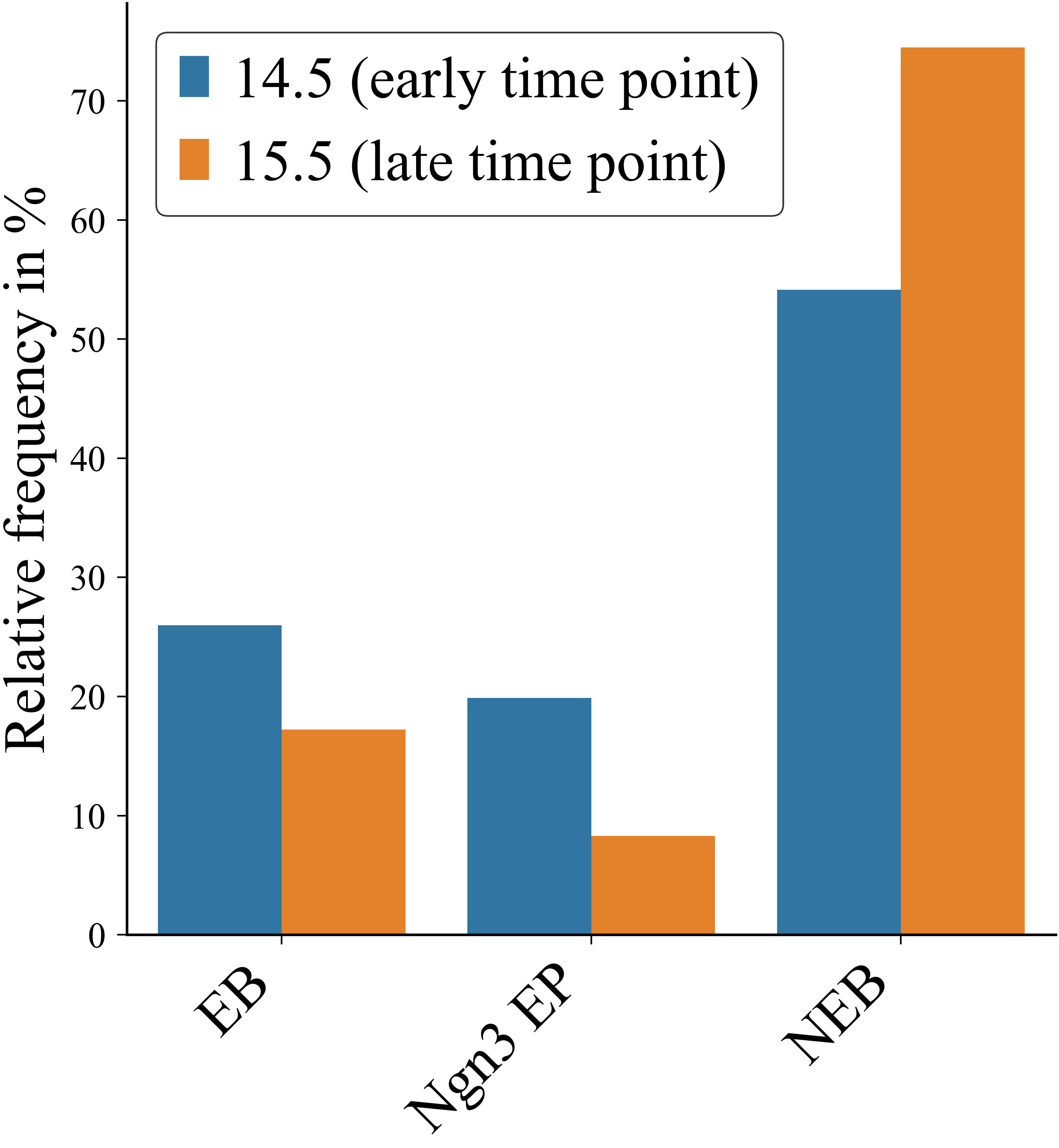}
  \caption{Distribution over the selected lineages and time points. Cells of the non-endocrine branch are much more abundant at the later time point due to very high proliferation rates of Acinar and Ductal cells.}
  \label{appfig:dist_pancreas}
\end{figure}

\subsubsection{Pancreatic endocrinogenesis data}\label{app:pancreas_data}
The pancreatic endocrinogenesis data includes samples of embryonic days 14.5 and 15.5 \citep{bastidas2019comprehensive}. After standard preprocessing $16,206$ genes remained. The OT-ICNN-based algorithms and Waddington OT were run on the 50 principal components. Figure \ref{appfig:dist_pancreas} visualizes the distribution shift between the two time points. Note the increase in the NEB branch, which causes Ngn3 EP cells to be mapped to NEB without accounting for unbalancedness in OT-ICNN.

\subsubsection{Model architecture}\label{app:icnn}
An Input Convex Neural Network (ICNN) parameterizes a function $f$ such that $f$ is convex with respect to its input by imposing certain constraints \citep{amos2017input}.
Following Makkuva et al. we train two ICNNs, denoted by $f$ and $g$, with the following architecture \citep{makkuva2020optimal}:
\begin{itemize}
    \item $K$ \textit{dense} layers consisting of weights $A_0, \dots, A_K$ applied to the raw input $x$,
    \item $K-1$ \textit{positive dense} layers consisting of non-negative weights $W_1, \dots, W_K$ applied to intermediate outputs $z_{k-1}$ as defined below.
\end{itemize}
Then, layer $k$ is defined as
\begin{equation}
    z_{k} = \phi((W_k z_{k-1}) + (A_k x + b_k))
\end{equation}
where $\phi$ is a convex non-decreasing activation function, $b_k$ the bias term and $A_k$ the weight matrix.
In the last layer, we apply no activation function. Additionally, we use a quadratic first layer:
\begin{equation}
    z_0 = (\phi(A_0 x + b_0))^2
\end{equation}
We enforce the non-negativity constraint on of the weights $W$ by weight clipping, while we only use a penalization term for negative weights of $g$
\begin{equation}
    R(W^g) = \sum_{w \in W^g}||\max(0, -w)||^2_2
\end{equation}
where $W^g$ denotes the set of weight matrices in the positive dense layers of the ICNN parameterizing $g$. We train with the following hyperparameters:

\begin{itemize}
    \item learning rate: $0.001$
    \item optimizer: \textit{Adam($\beta_1=0.5, \beta_2=0.9$)}
    \item hidden layers: $[64, 64, 64, 64]$
    \item inner loop iterations: $10$
    \item outer loop iterations: $25000$
    \item batch size: $1024$
    \item activation function: \textit{Leaky ReLU($\beta=0.01$)}
    \item gradient clipping to norm: $1.0$
\end{itemize}

Additionally, we employ a grid search over different levels of unbalancedness $\tau$. Results are reported in Figure \ref{appfig:taueffect_pancreas}. Introducing unbalancedness gradually improves performance up to a certain level. The results reported in Section~\ref{subsec:trajectory} use $\tau=0.85$.

\begin{figure}[H]
  \centering
  \includegraphics[width=.4\linewidth]{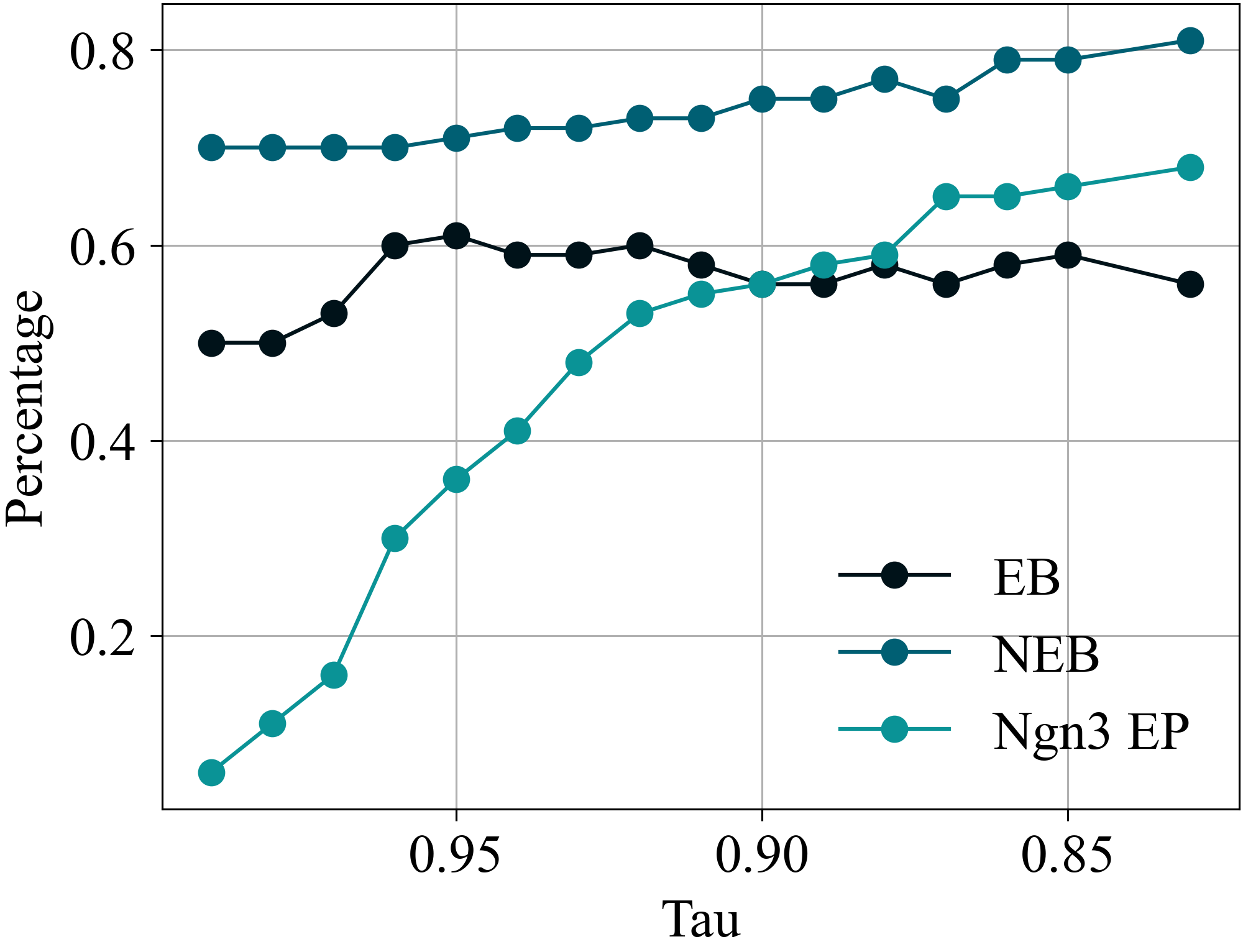}
  \caption{Effect of unbalancedness parameter $\tau$ on aggregated transition probabilities. The trend shows that introducing unbalancedness to a certain extent by lowering $\tau$ improves performance for all lineages. Metrics are computed the same way as in Table \ref{table:transition_probs}.}
  \label{appfig:taueffect_pancreas}
\end{figure}

\subsubsection{Pretraining}\label{app:pretraining}
We pretrain the ICNN parameterizing $f$ on the identity map as suggested in \citet{korotin2020wasserstein, amos2023meta} such that $\nabla f(x) = x$ and then copy the weights to $g$ for them to be mutually inverse $\nabla f (\nabla g(x)) \approx x$ and $\nabla g (\nabla f(x)) \approx x$. Therefore, we train on $\mu \sim \mathcal{N}(0, 3)$ for $15,000$ iterations. 

Additionally, for the balanced settings, we perform best model selection based upon the lowest forward Sinkhorn divergence which is a debiased estimate of the Wasserstein distance. We also empirically observe better results using this stopping criterion for the unbalanced setting.

\subsubsection{Cell type transition metric}\label{app:cell_type_transitions}
We follow \cite{bastidas2019comprehensive} to obtain the ground truth of cell type transitions. We only consider cell type transitions where the target cell state is a terminal cell state $t \in T = \{\text{Acinar, Ductal, Alpha, Beta, Delta, Epsilon\}}$ or a union thereof. Let \textbf{ED} be the set of endocrine cell types (Alpha, Beta, Delta, Epsilon). We assume the following cell type transitions are exclusively correct (denoted by $\rightarrow$), i.e. there is no descending cell type (or set of cell types) other than the given one.
We partition all considered cell type transitions into three categories.

The first set of considered transitions are endocrine branch (\textbf{ED}) transitions:

\begin{itemize}
    \item Fev+ Alpha (\textbf{FA}) $\rightarrow$ Alpha (\textbf{A})
    \item Fev+ Beta (\textbf{FB}) $\rightarrow$ Beta (\textbf{B})
    \item Fev+ Delta (\textbf{FD}) $\rightarrow$ Delta (\textbf{D})
    \item Fev+ Epsilon (\textbf{FE}) $\rightarrow$ Epsilon (\textbf{E})
    \item \textbf{A} $\rightarrow$ \textbf{A}
    \item \textbf{B} $\rightarrow$ \textbf{B}
    \item \textbf{D} $\rightarrow$ \textbf{D}
    \item \textbf{E} $\rightarrow$ \textbf{E}
\end{itemize}

The second set of transitions are \textbf{Ngn3 EP} transitions:
\begin{itemize}
    \item Ngn3 high early (\textbf{NE}) $\rightarrow$ \textbf{ED}
    \item Ngn3 high late (\textbf{NL}) $\rightarrow$ \textbf{ED}
\end{itemize}
The third set of transitions is the non-endocrine branch (\textbf{NEB})
\begin{itemize}
    \item Ductal (\textbf{DU}) $\rightarrow$ \textbf{DU}
    \item Tip (\textbf{T}) $\rightarrow$ Acinar (\textbf{AC})
    \item \textbf{AC} $\rightarrow$ \textbf{AC}
\end{itemize}

In Table \ref{apptable:transitionprobs} we report the transition probabilities for all above-mentioned cell type transitions for OT-ICNN, UOT-ICNN, and established trajectory inference methods.

\subsubsection{Evaluation of competing methods}
\label{app:scti-competing-details}
For evaluating cell type transitions we use CellRank kernels. Here, kernels quantify transition probabilities based on vector fields. For all methods yielding velocity vectors (OT-ICNN, UOT-ICNN, scVelo, TrajectoryNet), we use the VelocityKernel. For evaluating Waddington OT (WOT), we use the WOTKernel. 

\xhdr{scVelo.}
To infer RNA velocity with scVelo, we first selected genes measured in at least $20$ cells in both unspliced and spliced transcripts. Next, cells are normalized by the median cell size, and the $2000$ highly variable genes are selected. For these preprocessing steps, we used scVelo's \texttt{filter\_and\_normalize} function. Following, moments were calculated by the \texttt{scvelo.pp.moments} function with the settings \texttt{n\_pcs=50} principal components, and \texttt{n\_neighbors=30} nearest neighbors. RNA velocity was inferred using the \texttt{recover\_dynamics} function implemented in scVelo.

\xhdr{TrajectoryNet.}
We run trajectory net with default parameters as suggested by the author's Jupyter notebooks. Specifically, \texttt{embedding=PCA}, \texttt{max\_dim=10}, \texttt{max\_iterations=10,000} and \texttt{vecint=1e-4}. We computed velocities by subtracting the inferred coordinates from the original coordinates in the embedding space. Since we were able to retrieve the inferred coordinates only for one time point, we set the velocities of the other time point to $\textbf{0}$.

\xhdr{Waddington OT.}
The Waddington OT results were calculated with CellRank's \texttt{WOTKernel}. For the corresponding transition matrix, we considered both inter and intra timepoint transitions. The intra timepoint transitions were quantified for each time point independently by the cell-cell nearest neighbor graph, and assigned a weight of $0.2$. Summarizing, \texttt{WOTKernel}'s \texttt{compute\_transition\_matrix} method was run with \texttt{growth\_iters=3}, \texttt{growth\_rate\_key="growth\_rate\_init"}, \texttt{self\_transitions="all"}, and \texttt{conn\_weight=0.2}.

\subsubsection{Velocity Stream Embedding}\label{app:velstream}

Velocity vectors were projected onto the two-dimensional UMAP embedding using scVelo's \texttt{velocity\_embedding\_stream} function. To project the high-dimensional vectors, we consider the empirical displacement given by the difference of a cellular representation in the low-dimensional embedding. The displacement vector in UMAP coordinates is then defined as the expected empirical displacement under a transition matrix and corrected by the expected shift under a uniform distribution. To define the entry $(j, k)$ of the transition matrix, consider the reference cell $j$ and a neighbor $k$. The probability that cell $j$ transitions into cell $k$ is defined as the normalized Pearson correlation between the empirical displacement in the high dimensional f of the two cells, and the velocity vector of the reference cell.

\subsection{Simulated data}
\label{app:simulated_data}
The simulated data consists of the union of draws of uniform distributions on $\mu_1 \sim \mathcal{U}([-0.5,0.5] \times [-1.5,-0.5])$ and $\mu_2 \sim \mathcal{U}([4.5,5.5] \times [-1.5,-0.5])$. Similarly, $\nu_1 \   sim \mathcal{U}([-0.5,0.5] \times [0.5,1.5]$ and $\nu_2 = \mathcal{U}([4.5,5.5] \times [0.5,1.5])$. The source distribution $\mu$ is obtained by drawing 180 samples from $\mu_1$ and 120 samples from $\mu_2$. Similarly, the target distribution $\nu$ is obtained by 180 samples from $\nu_2$ and 120 samples from $\nu_1$. To compute couplings, we use $\epsilon=0.1$ in this case.

\end{document}